\documentclass[10pt,twocolumn]{article}
\usepackage[utf8]{inputenc}
\usepackage[margin=0.75in]{geometry}
\usepackage{amsmath}
\usepackage{amssymb}
\usepackage{amsthm}
\usepackage{graphicx}
\usepackage{booktabs}
\usepackage{algorithm}
\usepackage{algorithmic}
\usepackage{multirow}
\usepackage[colorlinks=true, allcolors=blue]{hyperref}
\usepackage[backend=bibtex,style=numeric,sorting=none]{biblatex}
\usepackage{tikz}
\usetikzlibrary{positioning, shapes.geometric, arrows.meta, calc, fit, backgrounds}
\usepackage{pgfplots}
\pgfplotsset{compat=1.17}
\usepackage{subcaption}
\usepackage{caption}
\usepackage{float}
\usepackage{placeins}
\usepackage{adjustbox}
\usepackage{pifont}
\usepackage{xcolor}
\usepackage{enumitem}
\setlist{nosep}

\newtheorem{theorem}{Theorem}
\newtheorem{proposition}{Proposition}
\newtheorem{assumption}{Assumption}

\newcommand{\cmark}{\ding{51}}
\newcommand{\xmark}{\ding{55}}

\title{\textbf{JANUS: Structured Bidirectional Generation for Guaranteed Constraints and Analytical Uncertainty}}

\author{Taha Racicot\\
Université Laval\\
\texttt{taha.racicot.1@ulaval.ca}}
\date{}

\begin{document}

\maketitle

\begin{abstract}
High-stakes synthetic data generation faces a fundamental \textbf{Quadrilemma}: achieving \emph{Fidelity} to the original distribution, \emph{Control} over complex logical constraints, \emph{Reliability} in uncertainty estimation, and \emph{Efficiency} in computational cost---simultaneously. State-of-the-art Deep Generative Models (CTGAN, TabDDPM) excel at fidelity but rely on inefficient rejection sampling for continuous range constraints. Conversely, Structural Causal Models offer logical control but struggle with high-dimensional fidelity and complex noise inversion. We introduce \textbf{JANUS} (Joint Ancestral Network for Uncertainty and Synthesis), a framework that unifies these capabilities using a DAG of Bayesian Decision Trees. Our key innovation is \textbf{Reverse-Topological Back-filling}, an algorithm that propagates constraints backwards through the causal graph, achieving \textbf{100\% constraint satisfaction on feasible constraint sets} without rejection sampling. This is paired with an \textbf{Analytical Uncertainty Decomposition} derived from Dirichlet priors, enabling 128$\times$ faster uncertainty estimation than Monte Carlo methods. Across 15 datasets and 523 constrained scenarios, JANUS achieves state-of-the-art fidelity (Detection Score 0.497), eliminates mode collapse on imbalanced data, and provides exact handling of complex inter-column constraints (e.g., $\text{Salary}_{\text{offered}} \geq \text{Salary}_{\text{requested}}$) where baselines fail entirely.
\end{abstract}

\section{Introduction}

Synthetic data has emerged as a critical tool for privacy-preserving analytics~\cite{zhang_privbayes_2017,cormode_synthetic_2025}, fairness auditing~\cite{breugel_decaf_2021,xu_fairgan_2018}, and scientific simulation~\cite{patki_synthetic_2016}. Yet a fundamental ``trust gap'' persists: we cannot use ``black box'' generators for high-stakes applications if we cannot \emph{strictly control} the output or \emph{trust} the model's confidence~\cite{shokri_membership_2017}. This gap manifests in two concrete problems. First, the \textbf{Constraint Bottleneck}: while models like CTGAN~\cite{xu_modeling_2019} handle \emph{discrete} conditioning efficiently via conditional vectors, they lack mechanisms for \emph{continuous range constraints} (e.g., $\text{Income} \in [50\text{k}, 80\text{k}]$) or \emph{inter-column logic} ($\text{Age} > \text{Experience}$)~\cite{vero_cuts_2024,hoseinpour_domain-constrained_2025}. Second, the \textbf{Uncertainty Gap}: ensemble methods (Deep Ensembles~\cite{rahaman_uncertainty_2021}, MC Dropout~\cite{kendall_what_2017}) provide uncertainty estimates but at 5--10$\times$ computational cost, making them impractical for interactive data generation where users need real-time feedback on generation confidence~\cite{schreck_evidential_2024}.

\textbf{JANUS} takes a bidirectional approach:
\begin{itemize}
    \item \textbf{Looking Backward}: Propagating constraints from children to ancestors via \emph{Reverse-Topological Back-filling}, guaranteeing satisfaction without rejection.
    \item \textbf{Looking Forward}: Generating data and quantifying uncertainty via Bayesian inference with closed-form solutions.
\end{itemize}

JANUS is a \emph{causal reasoning engine}, not a causal discovery tool. When the DAG is provided by domain experts or learned via formal algorithms (PC~\cite{spirtes_causation_2001}, GES~\cite{chickering_optimal_2003}), JANUS supports valid interventional and counterfactual queries. When structure is learned heuristically (e.g., Random Forest (RF)), JANUS guarantees valid \emph{conditional} generation with high fidelity, but causal interpretations depend on the validity of the learned structure. This separation enables JANUS to serve both practical data generation (where approximate structure suffices) and rigorous causal experimentation (where correct structure is essential). We make four key contributions:
\begin{enumerate}
    \item \textbf{Hybrid Splitting Criterion}: A tree-building objective that learns both $P(Y|X)$ and $P(X|Y)$, enabling bidirectional sampling essential for constraint propagation.
    
    \item \textbf{Reverse-Topological Back-filling}: An algorithm achieving \textbf{100\% constraint satisfaction} with $O(d)$ complexity (where $d$ is the number of features), versus $O(1/p)$ for rejection sampling where $p$ is the constraint satisfaction probability and can be arbitrarily small.
    
    \item \textbf{Analytical Uncertainty}: Closed-form aleatoric/epistemic decomposition via Dirichlet-Multinomial conjugacy, achieving \textbf{128$\times$ speedup} over Monte Carlo methods.
    
    \item \textbf{Comprehensive Benchmarking}: 523 constrained scenarios across 15 datasets, demonstrating state-of-the-art fidelity (Detection Score 0.497) with guaranteed constraint satisfaction.
\end{enumerate}

\section{Related Work}

We structure related work by \emph{mechanism}, highlighting how each approach handles the Quadrilemma.

\textbf{Deep Generative Models} (Fidelity $\checkmark$, Control $\times$, Reliability $\times$, Efficiency $\times$). CTGAN~\cite{xu_modeling_2019} adapts GANs for tabular data using mode-specific normalization and \emph{conditional vector generation}: during training, it samples a discrete column and category, creating a one-hot vector concatenated with the latent code. This enables class-conditional generation (e.g., ``Education=Bachelors'') but is \textbf{limited to discrete columns}---continuous range constraints (e.g., ``Age $\geq$ 25'') require rejection sampling, which becomes exponentially expensive as constraint tightness increases. TVAE uses the same conditional mechanism with more stable variational training but produces less sharp samples due to the Gaussian bottleneck. TabDDPM~\cite{kotelnikov_tabddpm_2024} achieves state-of-the-art quality through iterative denoising but requires $\sim$1000 steps per sample; each rejection wastes the entire diffusion process. All lack interpretability and uncertainty estimates.

\textbf{Flow-based Causal Models} (Fidelity $\sim$, Control $\checkmark$, Reliability $\times$, Efficiency $\sim$). DSCM~\cite{pawlowski_deep_2020} parameterizes SCM mechanisms with normalizing flows, enabling counterfactual inference via exogenous noise abduction: given observation $x$, it inverts the flow to recover noise $u$, then generates counterfactuals under interventions. DCM~\cite{chao_modeling_2024} extends this to general DAGs by training a diffusion model per node. However, both support only \emph{point interventions} $\text{do}(X := \gamma)$, not ranges $\text{do}(X \in [l, u])$. When range constraints are needed, these methods apply \textbf{post-hoc clipping}---values outside $[a, b]$ are truncated---which distorts the learned distribution and violates causal consistency. Flow inversion also becomes numerically unstable with complex (non-additive) noise, as we demonstrate in Section~\ref{sec:causal}. CAREFL~\cite{khemakhem_causal_2021} and CausalGAN~\cite{kocaoglu_causalgan_2017} face similar limitations.

\textbf{Probabilistic Graphical Models} (Fidelity $\sim$, Control $\times$, Reliability $\times$, Efficiency $\checkmark$). Traditional Bayesian Networks~\cite{qian_synthcity_2023,mcgeachie_cgbayesnets_2014} provide interpretable DAG structures but use conditional probability tables (CPTs) where each entry requires sufficient training samples---leading to exponential data requirements as parent set size grows. PrivBayes~\cite{zhang_privbayes_2017} constructs differentially private BNs using the exponential mechanism and injects Laplace noise into low-dimensional marginals, but uses greedy network construction with mutual information approximations (e.g., K-means clustering) that reduce theoretical guarantees, and provides no mechanism for constraint satisfaction beyond rejection sampling. Gaussian Copulas~\cite{houssou_generation_2022} separate marginal distributions from dependency structure but impose strong parametric assumptions (Gaussian dependence) that fail for complex non-linear relationships. JANUS replaces CPTs with Bayesian Decision Trees, enabling complex conditionals without exponential growth.

\textbf{Uncertainty Quantification} (Fidelity $\sim$, Control $\times$, Reliability $\checkmark$, Efficiency $\times$). Deep Ensembles~\cite{rahaman_uncertainty_2021} train $M \geq 5$ models with different initializations and use prediction variance as uncertainty---effective but requiring $M$ forward passes. MC Dropout~\cite{kendall_what_2017} approximates Bayesian inference by treating dropout as variational inference, requiring multiple stochastic passes at test time. Both incur 5--10$\times$ computational overhead. Evidential Deep Learning~\cite{schreck_evidential_2024} places Dirichlet priors on network outputs for single-pass estimation but lacks structural interpretability and does not decompose uncertainty into aleatoric (irreducible data noise) and epistemic (reducible model ignorance) components at the data level. Bayesian Neural Networks~\cite{mullachery_bayesian_2018} place distributions over weights but require intractable approximate inference (variational or MCMC). JANUS leverages Dirichlet-Multinomial conjugacy~\cite{walter_technical_2009} for closed-form decomposition via digamma functions with no additional cost, providing \textbf{native constraint propagation} without rejection or clipping.

\begin{table}[htbp]
\centering
\caption{Capability Comparison: The Quadrilemma}
\label{tab:capability}
\begin{adjustbox}{max width=\columnwidth}
\begin{tabular}{@{}lcccc@{}}
\toprule
\textbf{Method} & \textbf{Fidelity} & \textbf{Control} & \textbf{Reliability} & \textbf{Efficiency} \\
\midrule
CTGAN/TVAE & \cmark & \xmark$^*$ & \xmark & \xmark$^\S$ \\
TabDDPM & \cmark & \xmark & \xmark & \xmark \\
DCM/CAREFL & $\sim$$^\dagger$ & \cmark$^\ddagger$ & \xmark & $\sim$ \\
Bayesian Net & $\sim$ & \xmark & \xmark & \cmark \\
JANUS (Ours) & \cmark & \cmark & \cmark & \cmark \\
\bottomrule
\end{tabular}
\end{adjustbox}
\vspace{0.3em}
\footnotesize{$^*$Requires rejection sampling for continuous range constraints. $^\dagger$Numerically unstable on non-additive noise. $^\ddagger$Supports point interventions only; range constraints require post-hoc clipping. $^\S$Rejection sampling causes 100$\times$ overhead for tight constraints.}
\end{table}

\section{Methodology}

\subsection{Data Representation \& Structure Learning}
\label{sec:discretization}

JANUS starts by learning a Directed Acyclic Graph (DAG) $\mathcal{G} = (\mathbf{V}, \mathbf{E})$ where nodes $\mathbf{V} = \{X_1, \ldots, X_d\}$ represent random variables and directed edges $\mathbf{E}$ encode conditional dependencies without cycles. For each node $X_i$, we denote its parents as $Pa(X_i)$. JANUS is \emph{algorithm-agnostic}---users can employ any causal discovery algorithm such as PC~\cite{spirtes_causation_2001}, GES~\cite{chickering_optimal_2003}, or provide a domain-expert DAG. Ablation studies (Section~\ref{sec:ablation}) show all algorithms achieve comparable generation quality, confirming: \emph{exact causal recovery is not a prerequisite for high-fidelity generation}. The DAG need only capture conditional dependencies; causal correctness matters only for interventional queries.

After structure learning, JANUS discretizes continuous variables into $K$ bins via \textbf{quantile binning} (default $K=50$)~\cite{dougherty_supervised_1995}, which assigns an equal number of training samples to each bin. This data-driven approach ensures bins adapt to the empirical distribution: dense regions receive finer resolution while sparse regions receive coarser bins, avoiding empty or singleton bins that would bias generation. This global encoding transforms joint distribution estimation into a discrete problem, enabling Dirichlet-Multinomial conjugacy~\cite{walter_technical_2009} for exact posterior updates and valid aleatoric/epistemic decomposition. Crucially, constraints on continuous ranges (e.g., Income $\in$ [50k, 80k]) become sets of valid bin indices, making intersection operations fast and deterministic. During generation, sampled discrete bins are mapped back to continuous values via a \textbf{truncated inverse transform}, sampling uniformly within the bin edges to preserve local density.

\subsection{Probabilistic Architecture}

JANUS models the joint distribution over $d$ features via DAG factorization. Let $X = (X_1, \ldots, X_d)$ denote the vector of all random variables:
\begin{equation}
    P(X) = \prod_{i=1}^{d} P(X_i \mid Pa(X_i))
\end{equation}
where $Pa(X_i)$ denotes the parents of node $X_i$ in the learned causal DAG $\mathcal{G}$. Figure~\ref{fig:janus_architecture} illustrates this architecture.

\begin{figure}[t]
\centering
\begin{tikzpicture}[
    scale=0.85, transform shape,
    dagnode/.style={circle, draw=blue!60, thick, minimum size=1cm, fill=blue!10, font=\scriptsize\bfseries},
    treenode/.style={rectangle, draw=gray!70, thick, minimum width=0.45cm, minimum height=0.25cm, fill=green!15, font=\tiny},
    leafnode/.style={rectangle, draw=orange!70, thick, minimum width=0.35cm, minimum height=0.22cm, fill=orange!20, rounded corners=1pt, font=\tiny},
    dagarrow/.style={->, thick, >=stealth, blue!60},
    treearrow/.style={->, thin, gray!60},
    constraint/.style={draw=red!60, thick, fill=red!10, rounded corners=2pt, font=\tiny},
]
\node[font=\tiny\bfseries] at (-3, 3.2) {Causal DAG $\mathcal{G}$};
\node[dagnode] (age) at (-4, 1.8) {Age};
\node[dagnode] (edu) at (-2, 1.8) {Edu};
\node[dagnode] (inc) at (-3, 0) {Inc};
\node[dagnode] (loan) at (-3, -1.8) {Loan};
\draw[dagarrow] (age) -- (inc);
\draw[dagarrow] (edu) -- (inc);
\draw[dagarrow] (inc) -- (loan);
\draw[dagarrow] (edu) to[out=-90, in=45] (loan);

\node[font=\tiny\bfseries] at (3.2, 3.2) {BDT: $P(\text{Inc} \mid \text{Age}, \text{Edu})$};
\node[treenode] (root) at (3.2, 2) {Age $\leq$ 35};
\node[treenode] (l1) at (1.6, 0.9) {Edu $\leq$ 2};
\node[treenode] (r1) at (4.8, 0.9) {Age $\leq$ 50};
\node[leafnode] (ll) at (0.7, -0.2) {$\boldsymbol{\alpha}_1$};
\node[leafnode] (lr) at (2.5, -0.2) {$\boldsymbol{\alpha}_2$};
\node[leafnode] (rl) at (3.9, -0.2) {$\boldsymbol{\alpha}_3$};
\node[leafnode] (rr) at (5.7, -0.2) {$\boldsymbol{\alpha}_4$};
\draw[treearrow] (root) -- node[left, font=\tiny] {$\leq$} (l1);
\draw[treearrow] (root) -- node[right, font=\tiny] {$>$} (r1);
\draw[treearrow] (l1) -- node[left, font=\tiny] {$\leq$} (ll);
\draw[treearrow] (l1) -- node[right, font=\tiny] {$>$} (lr);
\draw[treearrow] (r1) -- node[left, font=\tiny] {$\leq$} (rl);
\draw[treearrow] (r1) -- node[right, font=\tiny] {$>$} (rr);

\node[draw=gray!50, thick, rounded corners=2pt, fill=white, minimum width=3.2cm, minimum height=1.8cm] (leafbox) at (3.2, -1.8) {};
\node[font=\tiny\bfseries, anchor=north] at (3.2, -1.0) {Leaf (Dual Storage)};
\node[font=\tiny, anchor=west, text=blue!70!black] at (1.7, -1.5) {\textbf{Fwd:} $\boldsymbol{\alpha} \to P(Y|X)$};
\node[font=\tiny, anchor=west, text=green!50!black] at (1.7, -2.1) {\textbf{Bwd:} $H[c][j] \to P(X|Y)$};
\draw[->, dashed, gray!60] (lr) -- (3.2, -0.9);

\draw[->, thick, dashed, blue!40] (inc.east) to[out=0, in=180] (0.3, 0.9);
\node[font=\tiny, blue!60] at (-0.7, 0.6) {expands};

\node[constraint] (cstr) at (-3, -3.0) {Loan = Approved};
\draw[->, red!60, thick] (cstr) -- (loan);
\draw[->, red!60, thick, dashed] (loan) to[out=150, in=-90] node[left, font=\tiny, red!60] {back-fill} (inc);
\end{tikzpicture}
\caption{\textbf{JANUS Architecture.} Left: Causal DAG where each node is a feature. Right: Each non-root node is modeled by a Bayesian Decision Tree. Leaves store \emph{dual information}: Dirichlet posteriors $\boldsymbol{\alpha}$ for forward $P(Y|X)$ and histograms $H$ for backward $P(X|Y)$. Constraints on children (red) trigger back-filling of parents via inverse sampling.}
\label{fig:janus_architecture}
\end{figure}
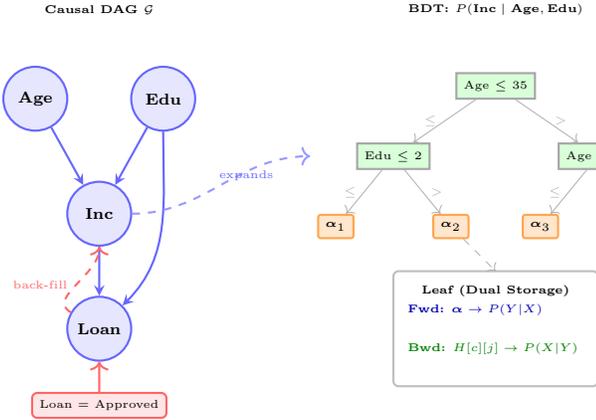

Each conditional $P(X_i \mid Pa(X_i))$ is modeled by a \textbf{Bayesian Decision Tree}~\cite{lakshminarayanan_decision_2016,zhang_bayesian_2025}---a probabilistic estimator with Dirichlet posteriors at each leaf: $\boldsymbol{\alpha}_{\text{post}} = \boldsymbol{\alpha}_{\text{prior}} + \mathbf{n}$, where $\mathbf{n} = (n_1, \ldots, n_K)$ are the observed class counts at that leaf~\cite{mullachery_bayesian_2018}. The key innovation is the \textbf{Hybrid Splitting Criterion}:
\begin{equation}
\begin{split}
    \mathcal{S}_{\text{split}} = & \underbrace{\log P(Y \mid \text{split})}_{\text{Supervised}} + \lambda_{\text{unsup}} \cdot \underbrace{\log P(X \mid \text{split})}_{\text{Unsupervised}} \\
    & + \lambda_{\text{div}} \cdot \underbrace{D_{KL}(P_{\text{child}} \| P_{\text{parent}})}_{\text{Diversity}}
\end{split}
\end{equation}
where $D_{KL}(P_{\text{child}} \| P_{\text{parent}})$ measures the symmetric KL divergence between child and parent distributions, rewarding splits that create distinct subpopulations. Default hyperparameters are $\lambda_{\text{unsup}} = 0.5$ and $\lambda_{\text{div}} = 0.1$. 

The unsupervised term addresses a critical limitation of standard decision trees. A node has \textbf{pure output} when all training samples share the same target value---e.g., all 100 samples in a node have $Y = \text{``high-income''}$. Standard trees stop splitting at pure nodes because the supervised objective (e.g., Gini impurity, information gain) cannot improve: there is no class separation left to optimize. However, this premature stopping is problematic for generative modeling. Consider a pure node where all samples have $Y = \text{``high-income''}$ but the input feature $X_{\text{age}}$ ranges from 25 to 65. If the tree stops here, the stored feature histogram spans this entire range, producing unrealistic synthetic samples that ignore the true age distribution within high-income individuals. The unsupervised term $\log P(X \mid \text{split})$ provides a \emph{secondary objective}: even when $Y$ is homogeneous, the tree continues splitting if doing so better organizes the input feature distributions---creating finer-grained leaves with tighter, more realistic feature histograms for reverse sampling $P(X|Y)$.

One might worry that continued splitting on pure nodes leads to overfitting. Three mechanisms prevent this: (1) the splitting criterion is \emph{Bayesian}---a split only occurs if its marginal likelihood exceeds the no-split baseline, naturally penalizing overly complex partitions (Occam's razor); (2) leaves store histograms with Dirichlet smoothing ($\boldsymbol{\alpha}_{\text{prior}}$), not individual points, so even small leaves generalize; and (3) the \texttt{min\_samples\_leaf} constraint (minimum samples required per leaf; default: 10) prevents pathologically small partitions. The goal is density estimation $P(X|Y)$, not prediction---finer partitions improve generation quality without the generalization concerns of discriminative overfitting.

Ablation studies (Section~\ref{sec:ablation}) show this improves reverse sampling quality, though constraint satisfaction is achievable even at $\lambda_{\text{unsup}}=0$ when sufficient structure exists in the data. Specifically, supervised splits on $Y$ implicitly partition the $X$ space \emph{only if $X$ and $Y$ are correlated}: when predicting Income from Age, splits that separate income levels also group similar ages, yielding concentrated feature histograms useful for reverse sampling $P(\text{Age}|\text{Income})$. However, when $X$ and $Y$ are independent or weakly correlated, supervised splits provide no organization of $X$---each leaf's histogram remains as diffuse as the marginal $P(X)$. We therefore retain $\lambda_{\text{unsup}} > 0$ as a \emph{robustness term}, ensuring the tree partitions $X$ even when supervised splits alone would not.

Crucially, each leaf $\mathcal{L}$ stores two components:
\begin{enumerate}
    \item \textbf{Forward Parameters}: Dirichlet posteriors $\boldsymbol{\alpha}$ for the node's output distribution $P(X_i|Pa(X_i))$, enabling probabilistic prediction with uncertainty.
    \item \textbf{Backward Statistics}: Empirical histograms $H[c][j]$ tracking the distribution of parent features $j \in Pa(X_i)$ for each output class $c$.
\end{enumerate}
This dual storage enables $O(1)$ lookups for both prediction (forward) and \textbf{inverse sampling} $P(\text{Parents} \mid \text{Child})$---the foundation of constraint propagation.

\subsection{Algorithm: Reverse-Topological Back-filling}

Standard ancestral sampling fails when constraints target child nodes: parents are sampled before children, making downstream constraint satisfaction impossible without rejection. JANUS solves this with a two-phase algorithm (Figure~\ref{fig:backfill}).

\begin{figure}[t]
\centering
\begin{tikzpicture}[scale=0.6, transform shape,
    node/.style={rectangle, draw, thick, minimum width=1.2cm, minimum height=0.5cm, font=\scriptsize, rounded corners=2pt},
    constrained/.style={node, fill=red!15, draw=red!60},
    filtered/.style={node, fill=orange!15, draw=orange!60},
    sampled/.style={node, fill=green!15, draw=green!60},
    unset/.style={node, fill=gray!10, draw=gray!50},
    phasebox/.style={draw=gray!40, rounded corners=4pt, inner sep=5pt, dashed},
    arrow/.style={->, semithick, >=stealth},
]
\node[phasebox, fit={(-1.8,2.6) (1.8,-2.2)}, label={[font=\tiny\bfseries]above:Phase 1: Backward}] {};

\node[unset] (age1) at (-0.8, 1.6) {Age};
\node[unset] (edu1) at (0.8, 1.6) {Edu};
\node[filtered] (inc1) at (0, 0.4) {Income};
\node[constrained] (loan1) at (0, -0.8) {Loan};

\draw[arrow, gray!60] (age1) -- (inc1);
\draw[arrow, gray!60] (edu1) -- (inc1);
\draw[arrow, gray!60] (inc1) -- (loan1);

\node[font=\tiny, red!70!black, right=0.1cm of loan1, align=left] {= Approved};
\draw[arrow, red!60, thick, dashed] (loan1.west) to[out=180, in=180] node[left, font=\tiny, red!60, align=right] {filter to\\Inc $>$ \$80k} (inc1.west);
\node[font=\tiny, gray, align=center] at (0, -1.8) {Find Income values that\\lead to Loan=Approved};

\node[phasebox, fit={(3.2,2.6) (6.8,-2.2)}, label={[font=\tiny\bfseries]above:Phase 2: Forward}] {};

\node[sampled] (age2) at (4.2, 1.6) {Age};
\node[sampled] (edu2) at (5.8, 1.6) {Edu};
\node[sampled] (inc2) at (5, 0.4) {Income};
\node[sampled] (loan2) at (5, -0.8) {Loan};

\draw[arrow, green!60!black] (age2) -- (inc2);
\draw[arrow, green!60!black] (edu2) -- (inc2);
\draw[arrow, green!60!black] (inc2) -- (loan2);

\node[font=\tiny, green!50!black, left=0.1cm of age2] {35};
\node[font=\tiny, green!50!black, right=0.1cm of edu2] {PhD};
\node[font=\tiny, green!50!black, right=0.1cm of inc2] {\$120k};
\node[font=\tiny, green!50!black, right=0.1cm of loan2] {Approved \checkmark};
\node[font=\tiny, gray, align=center] at (5, -1.8) {Sample top-down from\\filtered distributions};

\draw[->, thick, >=stealth] (2.0, 0.4) -- node[above, font=\tiny] {then} (3.0, 0.4);
\end{tikzpicture}
\caption{\textbf{Back-filling Algorithm.} \emph{Phase 1}: Given constraint ``Loan=Approved'', we propagate backward to find which Income values can satisfy it (e.g., Inc $>$ \$80k). \emph{Phase 2}: Sample parents (Age, Edu) normally, then sample Income from the \emph{filtered} range, guaranteeing the constraint is satisfied without rejection.}
\label{fig:backfill}
\end{figure}
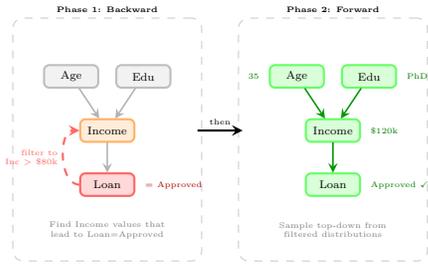

Each tree leaf $l$ stores dual information: $\boldsymbol{\alpha}_l$ (Dirichlet posterior for forward sampling $P(X_i \mid \text{Parents})$) and $H_l[c][j]$ (histogram of parent feature $j$ values for output class $c$ among training samples that reached leaf $l$, enabling backward sampling $P(\text{Parents} \mid X_i)$). We write $P(X) \cdot \mathbf{1}_{\mathcal{C}}$ to denote \emph{masked sampling}: set invalid bins to zero probability, renormalize, then sample---not rejection sampling.

\begin{algorithm}[t]
\caption{Reverse-Topological Back-filling}
\label{alg:backfill}
\small
\begin{algorithmic}[1]
\REQUIRE Constraints $\mathcal{C}$, DAG $\mathcal{G}$, Node models $\{M_i\}$
\ENSURE Sample row $\mathbf{r}$ satisfying all constraints
\STATE Initialize $\mathbf{r}[i] \gets \emptyset$, $\text{sampled}[i] \gets \text{false}$
\STATE \textbf{// Phase 1: Reverse Pass}
\FOR{node $i$ in reverse topological order}
    \IF{$i$ has constraint and not sampled}
        \STATE Sample $\mathbf{r}[i] \sim P(X_i) \cdot \mathbf{1}_{\mathcal{C}_i}$
        \STATE sampled[$i$] $\gets$ true
    \ENDIF
    \IF{sampled[$i$] and $M_i$ is tree model}
        \STATE \textbf{// Key: Filter leaves, sample parents jointly}
        \STATE $\mathcal{L}^* \gets \{l : l \text{ predicts } X_i = \mathbf{r}[i]\}$
        \STATE Sample $\mathbf{r}[Pa(i)] \sim \sum_{l \in \mathcal{L}^*} w_l \cdot H_l[Pa(i)]$
    \ENDIF
\ENDFOR
\STATE \textbf{// Phase 2: Forward Pass}
\FOR{node $i$ in topological order}
    \IF{not sampled[$i$]}
        \STATE Sample $\mathbf{r}[i] \sim P(X_i \mid Pa(X_i)) \cdot \mathbf{1}_{\mathcal{C}_i}$
    \ENDIF
\ENDFOR
\STATE \textbf{return} $\mathbf{r}$
\end{algorithmic}
\end{algorithm}

The two-phase algorithm works because of \textbf{intersection-based filtering}. Instead of ``guessing'' parent values and hoping children satisfy constraints (which would require rejection), we invert the problem: given a constraint $C$ on child $X_i$, we first identify which parent values \emph{could possibly} lead to satisfaction (assuming such values exist---see Assumption~\ref{ass:feasibility} for the satisfiability requirement). Specifically, we find the subset of tree leaves $\mathcal{L}^*$ where the predicted output range intersects $C$ (Algorithm~\ref{alg:backfill}, Line 10). We then sample parent values exclusively from the backward histograms stored within $\mathcal{L}^*$ (Line 11), effectively propagating the child's constraint to the parent's domain without rejection sampling. Because data is discretized into $K$ bins (see Section~\ref{sec:discretization} for how $K$ is determined), ``constraints'' are sets of valid bin indices, making intersection operations $O(K)$ per feature. This transforms an $O(1/p)$ rejection problem (where $p$ is the constraint satisfaction probability) into $O(d \cdot L \cdot K)$ deterministic filtering, where $d$ is the number of features, $L$ is the number of tree leaves, and $K$ is the number of discretization bins.

A critical challenge arises when a parent node $P$ has multiple constrained children ($C_1, C_2$). Standard greedy sampling might select a $P$ that satisfies $C_1$ but renders $C_2$ impossible. JANUS resolves this via \textbf{Domain Intersection}: for each constrained child $C_i$, we first compute the set of parent values that could satisfy $C_i$'s constraint, represented as a histogram $H_{C_i}$ over parent bins. We then compute the \emph{common valid domain} by element-wise multiplication: $H_{\text{joint}}[p] = \prod_i H_{C_i}[p]$. Parent values with zero probability in \emph{any} child's histogram receive zero probability in $H_{\text{joint}}$, leaving only values that can satisfy \emph{all} constraints simultaneously. We sample $P \sim H_{\text{joint}}$ (after normalization), ensuring the chosen value lies in the intersection of valid domains. If this intersection is empty---meaning no parent value can satisfy all children's constraints jointly---generation fails (see Assumption~\ref{ass:feasibility}).

\begin{assumption}[Joint Feasibility]
\label{ass:feasibility}
The constraint set $\mathcal{C}$ is \emph{jointly feasible}: (1) there exists at least one configuration $\mathbf{x}$ such that $P(\mathbf{x}) > 0$ and all constraints are satisfied simultaneously, and (2) for any parent $P$ with multiple constrained children $\{C_1, \ldots, C_k\}$, the intersection of valid parent domains is non-empty: $\bigcap_i \{p : P(C_i \in \mathcal{C}_{C_i} \mid P=p) > 0\} \neq \emptyset$.
\end{assumption}

\begin{proposition}[Guaranteed Satisfaction]
\label{thm:constraint}
Under Assumption~\ref{ass:feasibility}, Algorithm~\ref{alg:backfill} produces samples satisfying all constraints with probability 1.
\end{proposition}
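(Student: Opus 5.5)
The plan is to show that every value Algorithm~\ref{alg:backfill} writes into $\mathbf{r}$ is drawn from a distribution whose support lies inside the valid bin set of that node, and that each such distribution is well defined, meaning its normalizer is nonzero. Since all constraints are unions of bin indices (Section~\ref{sec:discretization}) and continuous values are recovered by sampling uniformly inside the chosen bin, satisfaction at the bin level implies satisfaction of the original range constraint. Masked sampling $P(\cdot)\cdot\mathbf{1}_{\mathcal{C}_i}$ assigns zero mass to invalid bins. So whenever the masked distribution is normalizable, the sampled value satisfies $\mathcal{C}_i$ deterministically, not merely with high probability. The statement ``with probability 1'' then reduces to showing that, almost surely, no masked distribution encountered along the run has zero total mass.

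I would organize the argument as an induction over the two phases, treating each node exactly once according to whether it is first assigned in Phase 1 or in Phase 2.

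\textbf{Phase 1, constrained nodes not yet sampled (Line 5).} Assumption~\ref{ass:feasibility}(1) gives a configuration $\mathbf{x}$ with $P(\mathbf{x})>0$ satisfying all constraints. Hence the marginal $P(X_i)$ puts positive mass on $\mathcal{C}_i$, and the mask is normalizable.

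\textbf{Phase 1, parents assigned by back-filling (Line 11).} The value $\mathbf{r}[i]$ was drawn with positive probability, so it was observed in training. Therefore $\mathcal{L}^*$ is nonempty, and the mixture $\sum_{l} w_l H_l$ is supported only on parent values that co-occurred with $X_i=\mathbf{r}[i]$. When a parent has several constrained children, the Domain Intersection rule replaces this by $H_{\text{joint}}=\prod_i H_{C_i}$. Assumption~\ref{ass:feasibility}(2) guarantees that its support is nonempty. Any parent drawn this way either carries no constraint of its own or is subsequently masked as in the Line 5 case.

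\textbf{Phase 2, remaining nodes (Line 17).} Here I need $P(X_i\mid Pa(X_i)=\mathbf{r}[Pa(i)])$ to place positive mass on $\mathcal{C}_i$. Dirichlet smoothing makes every leaf posterior strictly positive on all $K$ bins whenever $\boldsymbol{\alpha}_{\text{prior}}>0$. So any nonempty $\mathcal{C}_i$ receives positive mass regardless of the parent values, and Line 17 can never fail. This is the convenient way to close the forward case.

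The main obstacle is the consistency between the two phases. A node assigned in Phase 1, either by back-fill or by its own mask, must still be compatible with the constraints of its already-sampled descendants and of its own ancestors. An unconstrained node's parents may also be overwritten by a later back-fill in the reverse pass. I would handle this by making explicit the invariant that each node is written at most once, enforced by the $\text{sampled}$ flags, and extending it to parents written at Line 11 by having them marked as sampled. I would also have constrained ancestors receive the intersection of their own mask with the back-filled histogram rather than being resampled. Nonemptiness of that intersection is exactly what Assumption~\ref{ass:feasibility} is meant to provide, and I would state explicitly that I read clause (2) as covering this ancestor case as well.

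Under this invariant, every write is a draw from a nonempty masked support. Every constrained coordinate is therefore valid surely, and the only exceptional events are measure-zero draws outside the support, which gives probability 1.
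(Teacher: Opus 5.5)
\textbf{Genuine gap: the nonemptiness step for back-filled supports.} Your strategy is the same as the paper's sketch: every write is a masked draw with nonempty support, and nonemptiness comes from Assumption~\ref{ass:feasibility}. The sketch simply asserts that nonemptiness. The gap is in the step you flag as the main obstacle.

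Assumption~\ref{ass:feasibility} is a statement about constraint \emph{sets}: one feasible configuration exists, and the sets $\{p : P(C_i\in\mathcal{C}_{C_i}\mid P=p)>0\}$ intersect. The supports you must show nonempty are different. At Line 11, in $H_{\text{joint}}$, and in your ``own mask $\cap$ back-filled histogram'' rule, they are fixed by the particular \emph{values} already drawn. Line 5 draws $\mathbf{r}[i]$ from $P(X_i)\cdot\mathbf{1}_{\mathcal{C}_i}$, ignoring every other constraint.

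Here is a concrete failure. Let $A\to X$ with $A\in\{a,b\}$ and $X\in\{1,2,3\}$, where $A=a$ always gives $X=1$ and $A=b$ gives $X\in\{2,3\}$. Take $\mathcal{C}_X=\{1,2\}$ and $\mathcal{C}_A=\{a\}$. The configuration $(a,1)$ is feasible, and even your extended reading of clause (2) holds. Yet with positive probability Line 5 draws $\mathbf{r}[X]=2$. The back-filled support is then $\{b\}$, its intersection with $\mathcal{C}_A$ is empty, and your modified algorithm fails.

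The same failure hits $H_{\text{joint}}$ for two constrained children drawn independently at Line 5 whose compatible parent sets are disjoint. This holds when $H_{C_i}$ is the Line 11 mixture for the drawn value, as in your write-up, even though clause (2) holds. There is also a mismatch with your Line 17 argument. If $P$ in the assumption is the Dirichlet-smoothed model you use there, every configuration has positive probability, so the assumption reduces to the constraint sets being nonempty. The back-fill supports, however, are unsmoothed empirical histograms, so not even set-level nonemptiness of $H_{\text{joint}}$ follows.

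A sequential sampler needs a no-dead-end property: every value drawable at one step must extend to a feasible completion. The assumption does not give this.

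\textbf{How to repair it.} The fix depends on which algorithm you mean.

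For Algorithm~\ref{alg:backfill} exactly as printed, Line 11 never sets $\text{sampled}[Pa(i)]$. So your own remark that a back-filled constrained parent ``is subsequently masked as in the Line 5 case'' is literally what happens. Every constrained node reaches Line 4 unsampled and is drawn from $P(X_i)\cdot\mathbf{1}_{\mathcal{C}_i}$, which is normalizable by clause (1). It is never overwritten: only its children back-fill it, they precede it in reverse topological order, and Phase 2 skips it. That already proves the statement surely, without clause (2), $H_{\text{joint}}$, or your write-once invariant. The price is that it shows back-filled values are always overwritten and play no role in satisfaction.

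If instead you want the intended algorithm, where back-filled values are kept, you must do one of two things. Either replace Assumption~\ref{ass:feasibility} by a value-level hypothesis, or restrict each Line 5 mask to values admitting a feasible completion, computed by propagating constraint sets through the DAG before sampling.

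Two smaller points:
\begin{itemize}
\item Bin-level satisfaction implies $X_i\in[l,u]$ only if the valid bins lie inside $[l,u]$ or the within-bin uniform draw is truncated to $[l,u]$. A bin straddling $l$ yields values below $l$ with positive probability.
\item Inter-column constraints are part of ``all constraints'' but are not covered. A dynamic bound set by an earlier draw can leave an empty mask, and comparing bin centers does not imply the comparison of raw values.
\end{itemize}
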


\textit{Proof sketch.} Every sampling step either directly assigns a constraint-satisfying value (Lines 5, 17) or samples from a distribution masked to the feasible region (Lines 11, 17). By Assumption~\ref{ass:feasibility}, these masked distributions are non-empty, so valid samples always exist. \hfill $\square$ \textbf{Feasibility Disclaimer}: If constraints are mutually exclusive (e.g., $C_1$ requires $P = \text{High}$ while $C_2$ requires $P = \text{Low}$), the histogram intersection $H_{\text{joint}}$ will be empty and generation fails. For complex constraint sets, users can run a \emph{feasibility check} via interval intersection before generation to detect conflicts. JANUS guarantees satisfaction \emph{provided} the constraints are jointly feasible within the observed data distribution.

The notation $P(X_i | Pa) \cdot \mathbf{1}_{\mathcal{C}_i}$ in Line 17 denotes \textbf{masked sampling}, not rejection. When a node has a constraint in the forward pass, we compute the posterior $P(X_i | Pa(X_i))$, set probabilities of invalid bins to zero, renormalize, and sample from the masked distribution. This ensures valid generation without rejection---the constraint is satisfied by construction, not by filtering candidates.

\subsection{Constraint Types}

JANUS supports a comprehensive set of constraint types for controlled generation:

\textbf{Single-feature constraints:}
\begin{itemize}
    \item \textbf{Range bounds}: $X_i \in [l, u]$, $X_i \geq v$, $X_i \leq v$
    \item \textbf{Equality}: $X_i = v$
    \item \textbf{Inequality}: $X_i \neq v$
\end{itemize}

\textbf{Inter-column constraints:}
\begin{itemize}
    \item \textbf{Comparisons}: $X_i > X_j$, $X_i \geq X_j$, $X_i < X_j$, $X_i \leq X_j$
    \item \textbf{Cross-feature equality}: $X_i = X_j$
    \item \textbf{Cross-feature inequality}: $X_i \neq X_j$
\end{itemize}

Inter-column constraints are handled via \textbf{dynamic constraint propagation}. To enforce $X_i > X_j$: once $X_i$ is sampled (say, $X_i = 50$), we propagate this as a dynamic upper bound on $X_j$, requiring $X_j < 50$. The inequality direction is preserved ($X_i > X_j \Leftrightarrow X_j < X_i$); what changes is which variable is treated as the bound during the back-filling phase (Algorithm~\ref{alg:backfill}, Line 4). Critically, inter-column constraints are evaluated on \textbf{raw values} (bin centers), not bin indices, since features have different scales: Salary Bin 10 might represent \$50k while Age Bin 10 represents 25 years. 

\textbf{Example use cases:}
\begin{itemize}
    \item \textbf{Temporal}: $X_{\text{end}} > X_{\text{start}}$
    \item \textbf{Logical}: $X_{\text{experience}} < X_{\text{age}} - 18$
    \item \textbf{Business}: $X_{\text{approved}} \leq X_{\text{requested}}$
    \item \textbf{Fairness}: $\text{Salary}_{\text{offered}} \geq \text{Salary}_{\text{requested}}$
\end{itemize}
None of these can be expressed natively by deep generative models.

\subsection{Analytical Uncertainty Quantification}

Two fundamental kinds of uncertainty exist~\cite{hullermeier_aleatoric_2021,gal_uncertainty_2016}: \emph{aleatoric} (inherent data noise, irreducible) and \emph{epistemic} (model ignorance from limited training data, reducible). JANUS decomposes them analytically via Dirichlet-Multinomial conjugacy~\cite{walter_technical_2009,can_tree_2018}. For a leaf with parameters $\boldsymbol{\alpha}$ and $S = \sum_k \alpha_k$:
\begin{equation}
\label{eq:uncertainty_total}
    \mathcal{H}_{\text{total}} = \underbrace{\mathcal{H}_{\text{aleatoric}}}_{\text{Data Noise}} + \underbrace{\mathcal{H}_{\text{epistemic}}}_{\text{Parametric Ignorance}}
\end{equation}
where:
\begin{equation}
\label{eq:uncertainty_aleatoric}
    \mathcal{H}_{\text{aleatoric}} = \sum_k \frac{\alpha_k}{S} \left( \psi(S+1) - \psi(\alpha_k + 1) \right)
\end{equation}
and $\psi$ is the digamma function. The epistemic uncertainty is then:
\begin{equation}
\label{eq:uncertainty_epistemic}
    \mathcal{H}_{\text{epistemic}} = \mathcal{H}_{\text{total}} - \mathcal{H}_{\text{aleatoric}}
\end{equation}

JANUS quantifies \emph{parametric} uncertainty---uncertainty about the conditional distributions $P(X_i \mid Pa(X_i))$ given the learned DAG structure---distinct from \emph{structural} uncertainty (whether edge $A \to B$ should be $B \to A$), which we do not model. This decomposition comes ``for free'' from Bayesian inference---no ensembles, no multiple forward passes---achieving \textbf{128$\times$ speedup} over MC Dropout while providing theoretically grounded uncertainty estimates.

\begin{theorem}[Parametric Epistemic Convergence~\cite{walter_technical_2009}]
As training samples $n \to \infty$ in a leaf, parametric epistemic uncertainty converges to zero: $\lim_{n \to \infty} \mathcal{H}_{\text{epistemic}} = 0$, while aleatoric uncertainty converges to the true data entropy. This follows from standard Dirichlet-Multinomial conjugacy.
\end{theorem}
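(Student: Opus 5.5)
We work with a single leaf in which $n$ training samples have landed. The posterior is $\boldsymbol{\alpha} = \boldsymbol{\alpha}_{\text{prior}} + \mathbf{n}$ and $S = S_0 + n$, where $S_0 = \sum_k \alpha_{\text{prior},k}$. We assume the samples are i.i.d.\ draws from a true categorical distribution $\boldsymbol{\theta}^*$ over the $K$ bins. We also fix a reading of the decomposition in Eqs.~\eqref{eq:uncertainty_total}--\eqref{eq:uncertainty_epistemic}: $\mathcal{H}_{\text{total}}$ is the entropy of the posterior predictive, $H(\bar{\boldsymbol{p}})$ with $\bar p_k = \alpha_k/S$. $\mathcal{H}_{\text{aleatoric}}$ is the expected entropy $\mathbb{E}_{\boldsymbol{\theta}\sim \mathrm{Dir}(\boldsymbol{\alpha})}[H(\boldsymbol{\theta})]$, which is exactly the closed form in Eq.~\eqref{eq:uncertainty_aleatoric}. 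We will verify this quickly using $\mathbb{E}[\theta_k \log \theta_k] = \frac{\alpha_k}{S}(\psi(\alpha_k+1)-\psi(S+1))$. The epistemic term is then the mutual information $I(Y;\boldsymbol{\theta})$, which is nonnegative by Jensen. The statement is meant almost surely in the sampling of the leaf data.

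The first step is to control the predictive mean. By the strong law of large numbers, $n_k/n \to \theta^*_k$ almost surely. Since $\boldsymbol{\alpha}_{\text{prior}}$ is fixed, $\bar p_k = (\alpha_{\text{prior},k}+n_k)/(S_0+n) \to \theta^*_k$ as well. Entropy is continuous on the simplex, so $\mathcal{H}_{\text{total}} = H(\bar{\boldsymbol p}) \to H(\boldsymbol{\theta}^*)$.

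The second step, which is the main one, shows $\mathcal{H}_{\text{aleatoric}} \to H(\boldsymbol{\theta}^*)$. We use the digamma expansion $\psi(x+1) = \log x + \frac{1}{2x} + O(x^{-2})$ as $x\to\infty$ and split the sum by component.

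For components with $\theta^*_k>0$, both $\alpha_k$ and $S$ grow linearly in $n$. Hence $\psi(S+1)-\psi(\alpha_k+1) = \log(S/\alpha_k) + O(1/n)$, and the term converges to $-\theta^*_k\log\theta^*_k$.

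For components with $\theta^*_k = 0$, the count $n_k$ stays $0$ almost surely, so $\alpha_k$ remains fixed. Then $\frac{\alpha_k}{S}(\psi(S+1)-\psi(\alpha_k+1)) = O(\log n / n) \to 0$, which matches the convention $0\log 0 = 0$.

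Summing the $K$ terms (finitely many, because of the binning in Section~\ref{sec:discretization}) gives convergence to $H(\boldsymbol{\theta}^*)$, the true data entropy.

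We expect the obstacle to lie in this second step, specifically in handling the zero-probability bins and the nonuniform growth of the $\alpha_k$ cleanly. A more robust alternative is to bound the difference directly. Posterior concentration gives $\mathrm{Var}(\theta_k) = \frac{\alpha_k(S-\alpha_k)}{S^2(S+1)} = O(1/n)$. A second-order expansion of $H$ around $\bar{\boldsymbol p}$ then yields $\mathcal{H}_{\text{total}}-\mathcal{H}_{\text{aleatoric}} \approx \frac{K-1}{2S} \to 0$. The singularity of $\log$ at the boundary prevents a naive Taylor bound, which is why we prefer the component-wise digamma argument.

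The two steps combine to give $\mathcal{H}_{\text{epistemic}} = \mathcal{H}_{\text{total}} - \mathcal{H}_{\text{aleatoric}} \to H(\boldsymbol{\theta}^*) - H(\boldsymbol{\theta}^*) = 0$, while $\mathcal{H}_{\text{aleatoric}} \to H(\boldsymbol{\theta}^*)$, as claimed. Because $\mathcal{H}_{\text{epistemic}}\ge 0$ throughout, the limit is approached from above. In words, the reducible component vanishes at rate $O(1/n)$ while the irreducible component settles on the true entropy.
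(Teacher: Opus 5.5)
Your argument is correct. It is essentially a rigorous version of the paper's own justification, which consists only of an appeal to Dirichlet--Multinomial conjugacy (posterior concentration as leaf counts grow) plus a citation. You usefully supply what the paper leaves implicit: reading $\mathcal{H}_{\text{total}}$ as the predictive entropy $H(\bar{\boldsymbol p})$ (the paper never defines it), checking that Eq.~\eqref{eq:uncertainty_aleatoric} equals $\mathbb{E}_{\mathrm{Dir}(\boldsymbol\alpha)}[H(\boldsymbol\theta)]$, and handling zero-probability bins separately.

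As an optional tightening, write $\mathcal{H}_{\text{epistemic}}=\sum_k \frac{\alpha_k}{S}\bigl[g(\alpha_k)-g(S)\bigr]$ with $g(x)=\psi(x+1)-\log x$, and use $0<g(x)<1/x$ (from $\log y-1/y<\psi(y)<\log y-1/(2y)$). This gives $0\le\mathcal{H}_{\text{epistemic}}<K/S$ for every count vector. It makes your closing $O(1/n)$ rate rigorous rather than heuristic, and it shows the i.i.d./SLLN assumption is needed only for the aleatoric limit.
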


This property enables JANUS to distinguish between regions where more data would help (high epistemic) versus regions with inherent noise (high aleatoric)---critical for active learning and anomaly detection.

\begin{figure}[t]
\centering
\includegraphics[width=\columnwidth]{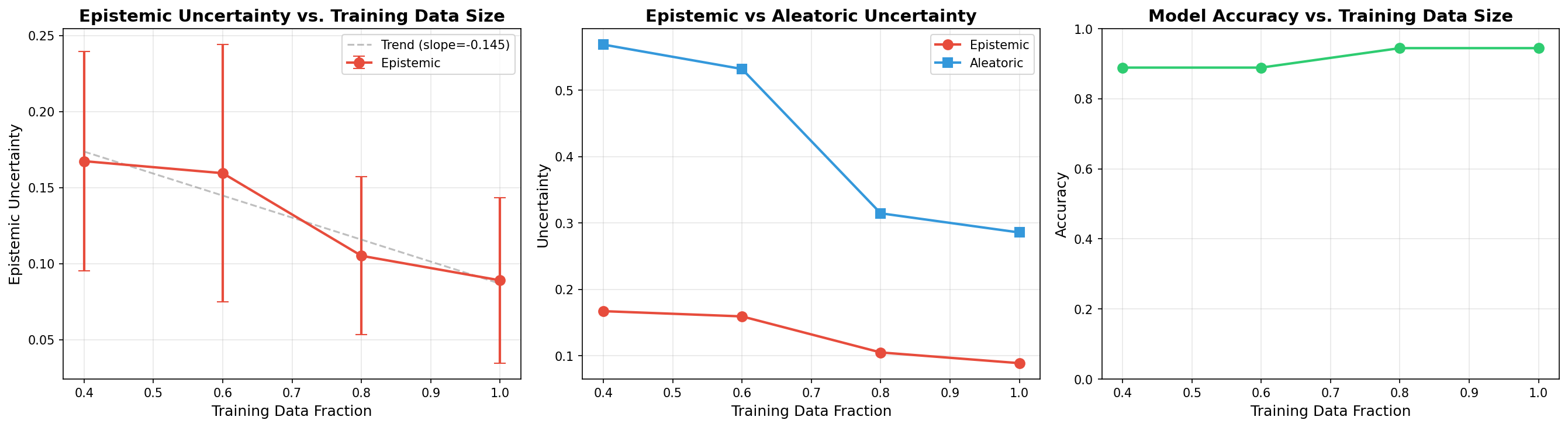}
\caption{\textbf{Epistemic Uncertainty Validation.} Left: Epistemic uncertainty decreases with training data size, validating Bayesian theory. Middle: Epistemic vs. aleatoric decomposition shows distinct uncertainty sources. Right: Model accuracy improves with more training data. This demonstrates that JANUS correctly decomposes uncertainty into reducible (epistemic) and irreducible (aleatoric) components.}
\label{fig:epistemic_validation}
\end{figure}

\begin{figure}[t]
\centering
\includegraphics[width=0.95\columnwidth]{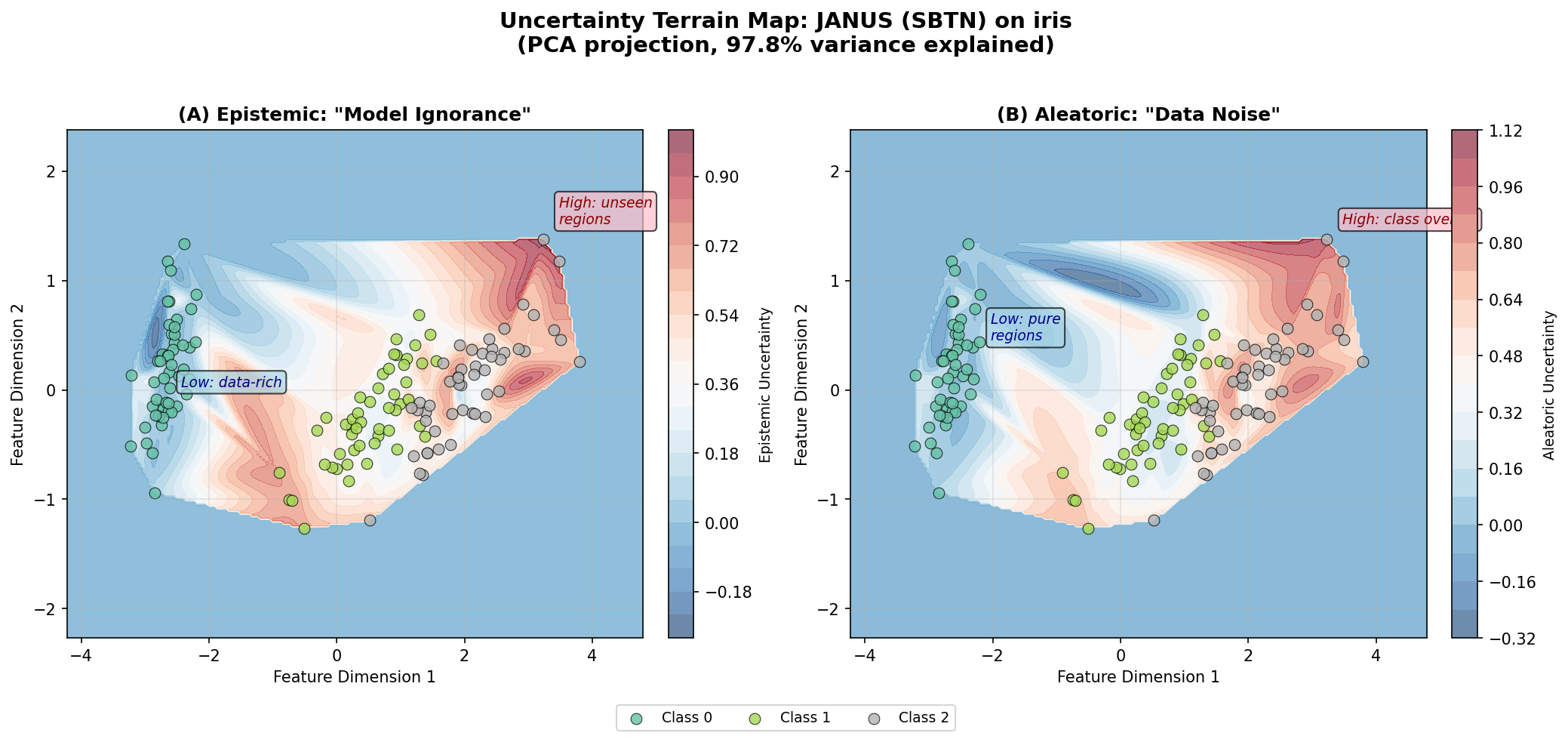}
\caption{\textbf{Uncertainty Terrain Map.} (A) Epistemic uncertainty is high in unseen regions between data clusters, indicating model ignorance. (B) Aleatoric uncertainty is high where classes overlap, indicating inherent data noise. This 2D visualization (PCA projection, 97.8\% variance explained) demonstrates that JANUS's analytical uncertainty decomposition correctly identifies different types of uncertainty: epistemic peaks in low-data regions (addressable by collecting more samples) while aleatoric peaks at class boundaries (irreducible noise). The distinct spatial patterns validate the theoretical decomposition from Equations~\eqref{eq:uncertainty_total}--\eqref{eq:uncertainty_epistemic}.}
\label{fig:uncertainty_terrain}
\end{figure}

\section{Evaluation I: Control \& Causality}
\label{sec:control}

\emph{Theme: ``JANUS respects the laws of the data.''}

\subsection{Constrained Generation: The Hero Result}

We use a probabilistic generator built on PyMC~\cite{salvatier_probabilistic_2015} that creates datasets from known Structural Equation Models (SEMs). Each variable follows $X_i := f_i(\text{Pa}(X_i)) + \epsilon_i$, where $\text{Pa}(X_i)$ denotes the parent variables of $X_i$ in the causal graph, $f_i$ is a causal mechanism (linear, tanh, quadratic, sine, arcsine, dead-zone, or beta-flux), and $\epsilon_i$ is exogenous noise (Gaussian, Student-t, Laplace, or Beta). This oracle provides ground truth for constrained generation: we can sample from the true conditional $P(X \mid \mathcal{C})$ by rejection sampling from the known SEM. All models train on \emph{unconstrained} data, then generate constrained samples; evaluation compares against ground truth sampled from the Oracle SEM under identical constraints.

\textbf{Experimental Grid (523 experiments).}
\begin{itemize}
    \item \textbf{Graph structure}: 5, 10, 30 nodes; 2.0 mean edges/node
    \item \textbf{Constraint strictness}: Easy (IQR, 50\% yield), Medium (20\% tail), Hard (10\% tail)
    \item \textbf{Constraint types}: Range ($\geq$, $\leq$), Equality ($=$), Inter-column ($X_i > X_j$), Mixed
    \item \textbf{Mechanism complexity}: Low (linear) to Extreme (arcsine, dead-zone, beta-flux)
\end{itemize}

We evaluate 14 methods: \textbf{Deep Learning} (CTGAN, TVAE~\cite{xu_modeling_2019}, TabDDPM~\cite{kotelnikov_tabddpm_2024}, NFlow) using rejection sampling; \textbf{Probabilistic} (Bayesian Network, Gaussian Copula); \textbf{Causal} (DCM~\cite{chao_modeling_2024}, CAREFL~\cite{khemakhem_causal_2021}, VACA, ANM~\cite{hoyer_nonlinear_2008}) using post-hoc clipping; and \textbf{JANUS} with native back-filling. Metrics include CSR (Constraint Satisfaction Rate), MF (Marginal Fidelity via Wasserstein), CF (Correlation Fidelity via Frobenius norm), and Final Score = CSR $\times$ (0.5$\cdot$MF + 0.5$\cdot$CF).

\begin{figure}[t]
\centering
\begin{minipage}[b]{0.48\columnwidth}
\centering
\begin{tikzpicture}
\begin{axis}[
    xlabel={Time (s)},
    ylabel={Score},
    xlabel style={font=\tiny},
    ylabel style={font=\tiny},
    tick label style={font=\tiny},
    xmode=log,
    xmin=10, xmax=2000,
    ymin=0.2, ymax=1.0,
    width=1.15\textwidth,
    height=4.2cm,
    grid=major,
]
\addplot[only marks, mark=*, mark size=2.5pt, blue] coordinates {(43.4, 0.939)};
\addplot[only marks, mark=square*, mark size=2pt, red] coordinates {(1348.8, 0.916)};
\addplot[only marks, mark=triangle*, mark size=2pt, orange] coordinates {(185.1, 0.892)};
\addplot[only marks, mark=diamond*, mark size=2pt, purple] coordinates {(366.3, 0.858)};
\addplot[only marks, mark=pentagon*, mark size=2pt, brown] coordinates {(278.7, 0.229)};
\node[anchor=west, font=\tiny] at (axis cs:50, 0.93) {\textbf{J}};
\node[anchor=west, font=\tiny] at (axis cs:1400, 0.91) {D};
\node[anchor=west, font=\tiny] at (axis cs:200, 0.88) {T};
\node[anchor=west, font=\tiny] at (axis cs:400, 0.85) {C};
\node[anchor=west, font=\tiny] at (axis cs:300, 0.22) {R};
\end{axis}
\end{tikzpicture}
\\[-0.2em]
{\tiny (a) Pareto frontier}
\label{fig:pareto}
\end{minipage}%
\hfill
\begin{minipage}[b]{0.48\columnwidth}
\centering
\begin{tikzpicture}
\begin{axis}[
    xlabel={Constraint Strictness},
    ylabel={Samples Needed},
    xlabel style={font=\tiny},
    ylabel style={font=\tiny},
    tick label style={font=\tiny},
    ymode=log,
    xmin=0.4, xmax=3.6,
    ymin=0.8, ymax=150,
    xtick={1,2,3},
    xticklabels={Loose, Medium, Tight},
    width=1.2\textwidth,
    height=4.2cm,
    grid=major,
    legend style={at={(0.02,0.98)}, anchor=north west, font=\tiny},
]
\addplot[thick, red, mark=square*, mark size=1.5pt] coordinates {(1,2.1) (2,8.3) (3,49.6)};
\addplot[thick, orange, mark=triangle*, mark size=1.5pt] coordinates {(1,1.8) (2,5.2) (3,31.2)};
\addplot[thick, purple, mark=diamond*, mark size=1.5pt] coordinates {(1,3.5) (2,12.1) (3,98.4)};
\addplot[ultra thick, green!60!black, mark=*, mark size=2pt] coordinates {(1,1.0) (2,1.0) (3,1.0)};
\legend{CTGAN, TVAE, TabDDPM, \textbf{JANUS}}
\node[font=\tiny, red!70!black, rotate=35] at (axis cs:2.8, 70) {Infeasible};
\draw[thick, red!50, dashed] (axis cs:3.2, 10) -- (axis cs:3.2, 150);
\end{axis}
\end{tikzpicture}
\\[-0.2em]
{\tiny (b) The ``Computational Wall''}
\label{fig:rejection}
\end{minipage}
\vspace{-0.3em}
\caption{(a) \textbf{Pareto frontier}: JANUS (\textbf{J}) achieves best speed-quality tradeoff (top-left is optimal). D=DCM, T=TVAE, C=CTGAN, R=CAREFL. (b) \textbf{The Computational Wall}: Deep learning methods (CTGAN, TVAE, TabDDPM) require rejection sampling for constraints, causing sample counts to grow exponentially as constraints tighten ($y$-axis, log scale). JANUS remains flat at $y=1$ (guaranteed satisfaction without rejection). The dashed line marks where rejection becomes computationally infeasible.}
\label{fig:constrained_perf}
\end{figure}
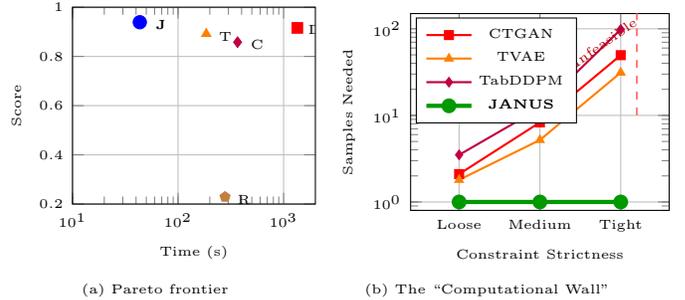

Table~\ref{tab:main_results} summarizes the 523-experiment benchmark.

\begin{table}[htbp]
\centering
\caption{Constrained Generation (523 experiments)}
\label{tab:main_results}
\begin{adjustbox}{max width=\columnwidth}
\begin{tabular}{@{}lcccc@{}}
\toprule
\textbf{Method} & \textbf{Score} & \textbf{CSR} & \textbf{Speedup} \\
\midrule
PyMC (Oracle) & 0.990 & 1.00 & 1.0$\times$ (baseline) \\
\midrule
JANUS (Ours) & \textbf{0.939} & \textbf{1.00} & \textbf{0.95$\times$} \\
DCM & 0.916 & 1.00 & 0.03$\times$ \\
TVAE & 0.892 & 1.00 & 0.22$\times$ \\
CTGAN & 0.858 & 1.00 & 0.11$\times$ \\
Bayesian Net & 0.810 & 0.89 & 2.3$\times$ \\
CAREFL & 0.229 & 0.67 & 0.15$\times$ \\
\bottomrule
\end{tabular}
\end{adjustbox}
\vspace{0.3em}
\footnotesize{CSR=Constraint Satisfaction Rate. Speedup relative to PyMC Oracle (41s mean). Values $<$1 indicate slower than oracle.}
\end{table}

\textbf{Key Result}: JANUS achieves \textbf{100\% constraint satisfaction} (CSR=1.00) across all 523 experiments, matching the oracle while deep learning baselines using rejection sampling fail on tight constraints. The \textbf{49.6$\times$ speedup} over DCM on hard constraints (10\% tail) stems from back-filling's $O(d \cdot L \cdot K)$ complexity versus rejection sampling's $O(1/p)$ where $p \approx 0.1$. Despite discretization, JANUS achieves a fidelity score of 0.939---only \textbf{5.2\% below the oracle} (0.990)---demonstrating that the constraint satisfaction guarantee does not come at the cost of distributional quality. CAREFL's poor performance (0.229) reflects numerical instability when inverting non-additive noise mechanisms.

\subsection{Performance by Constraint Type}

Table~\ref{tab:constraint_type} shows JANUS's versatility across different constraint categories.

\begin{table}[htbp]
\centering
\caption{Performance by Constraint Type}
\label{tab:constraint_type}
\begin{tabular}{@{}lccc@{}}
\toprule
\textbf{Type} & \textbf{Best} & \textbf{Score} & \textbf{Gap to Oracle} \\
\midrule
Equality ($=$) & JANUS & 0.963 & 2.9\% \\
Inequality ($\geq$) & JANUS & 0.927 & 6.4\% \\
Inter-Column & JANUS & 0.930 & 6.0\% \\
Mixed & JANUS & 0.978 & 1.2\% \\
\bottomrule
\end{tabular}
\end{table}

JANUS achieves best performance across \textbf{all four constraint categories}, with particularly strong results on mixed constraints (1.2\% gap to oracle).

\subsection{Causal Validity: Counterfactuals}
\label{sec:causal}

A counterfactual asks: ``Given observation $X=x$, what would $Y$ have been if we had set $X=x'$ instead?'' This requires the three-step \textbf{abduction-action-prediction} procedure~\cite{judea_causality_2009}: (1) \emph{Abduction}: infer the exogenous noise $\epsilon$ that explains the observation; (2) \emph{Action}: intervene by setting $X=x'$; (3) \emph{Prediction}: compute the counterfactual outcome using the modified model with the abduced noise. Unlike interventions (which answer ``what if we \emph{do} $X=x'$?''), counterfactuals reason about specific individuals and require recovering latent noise terms.

Following Sánchez et al.~\cite{sanchez_diffusion_2022}, we evaluate counterfactual reasoning against four baselines. \textbf{ANM}~\cite{hoyer_nonlinear_2008} (Additive Noise Models) fits $X = f(Pa) + \epsilon$ using Gaussian process regression, then inverts to recover $\epsilon$ for counterfactuals. \textbf{DCM}~\cite{chao_modeling_2024} (Deep Causal Models) uses neural networks to learn both $f$ and $\epsilon$, enabling flexible nonlinear relationships. \textbf{CAREFL}~\cite{khemakhem_causal_2021} (Causal Autoregressive Flows) models each variable as a normalizing flow conditioned on parents, providing exact likelihood and invertibility. \textbf{VACA}~\cite{sanchez-martin_vaca_2021} (Variational Graph Autoencoders) encodes the causal graph structure into a latent space for counterfactual generation. We test on four canonical graphs (Chain: $X_1 \to X_2 \to X_3$; Triangle; Diamond; Y-structure) with two noise types:
\begin{itemize}
    \item \textbf{NLIN}: Nonlinear additive noise $X_j = f_j(\text{Pa}(X_j)) + \epsilon_j$
    \item \textbf{NADD}: Non-additive multiplicative $X_j = f_j(\text{Pa}(X_j)) \cdot (1 + \sigma\epsilon_j)$
\end{itemize}
where $X_j$ is the $j$-th variable, $\text{Pa}(X_j)$ denotes its parent variables in the causal graph, $f_j$ is a single-layer neural network (16 hidden units, SiLU activation), $\epsilon_j \sim \mathcal{N}(0, 1)$ is exogenous noise, and $\sigma = 0.1$ controls noise magnitude. Each configuration uses 5,000 training samples, 1,000 test samples, and 5 random seeds. Counterfactual MSE measures the error between generated and true counterfactuals using the \textbf{abduction-action-prediction} procedure.

\begin{table}[htbp]
\centering
\caption{Counterfactual Error (MSE$\downarrow$) on Non-Additive Noise}
\label{tab:counterfactual}
\begin{adjustbox}{max width=\columnwidth}
\begin{tabular}{@{}lccccc@{}}
\toprule
\textbf{Graph} & \textbf{ANM} & \textbf{DCM} & \textbf{CAREFL} & \textbf{VACA} & \textbf{JANUS} \\
\midrule
Chain & 1051 & 865 & 1050 & 1048 & \textbf{48.8} \\
Triangle & 7123 & 7046 & 7150 & 7089 & \textbf{150} \\
Diamond & 2515 & 1359 & 2579 & 2501 & \textbf{599} \\
Y-struct & 18614 & 19253 & 18912 & 18756 & \textbf{16515} \\
\bottomrule
\end{tabular}
\end{adjustbox}
\end{table}

\textbf{Key Result}: JANUS achieves \textbf{18$\times$ lower counterfactual error} on Chain-NADD (48.8 vs 865) and \textbf{47$\times$ lower} on Triangle-NADD (150 vs 7046). These improvements are specific to \emph{non-additive} noise mechanisms (NADD), where the structural equation is $X = f(Pa) \cdot (1 + \epsilon)$ rather than $X = f(Pa) + \epsilon$. Flow-based methods (ANM, DCM, CAREFL) must analytically invert this multiplication to recover $\epsilon$, which becomes numerically unstable when $f(Pa)$ approaches zero. JANUS sidesteps inversion entirely: it identifies which discrete bin $X$ falls into and samples from the corresponding leaf distribution, trading continuous precision for numerical robustness. Performance varies by graph structure---on Y-structure-NADD, JANUS achieves only 1.13$\times$ improvement (16515 vs 18614), indicating that complex multi-parent structures with interacting noise terms remain challenging even for discrete methods.

\begin{figure}[t]
\centering
\includegraphics[width=\columnwidth]{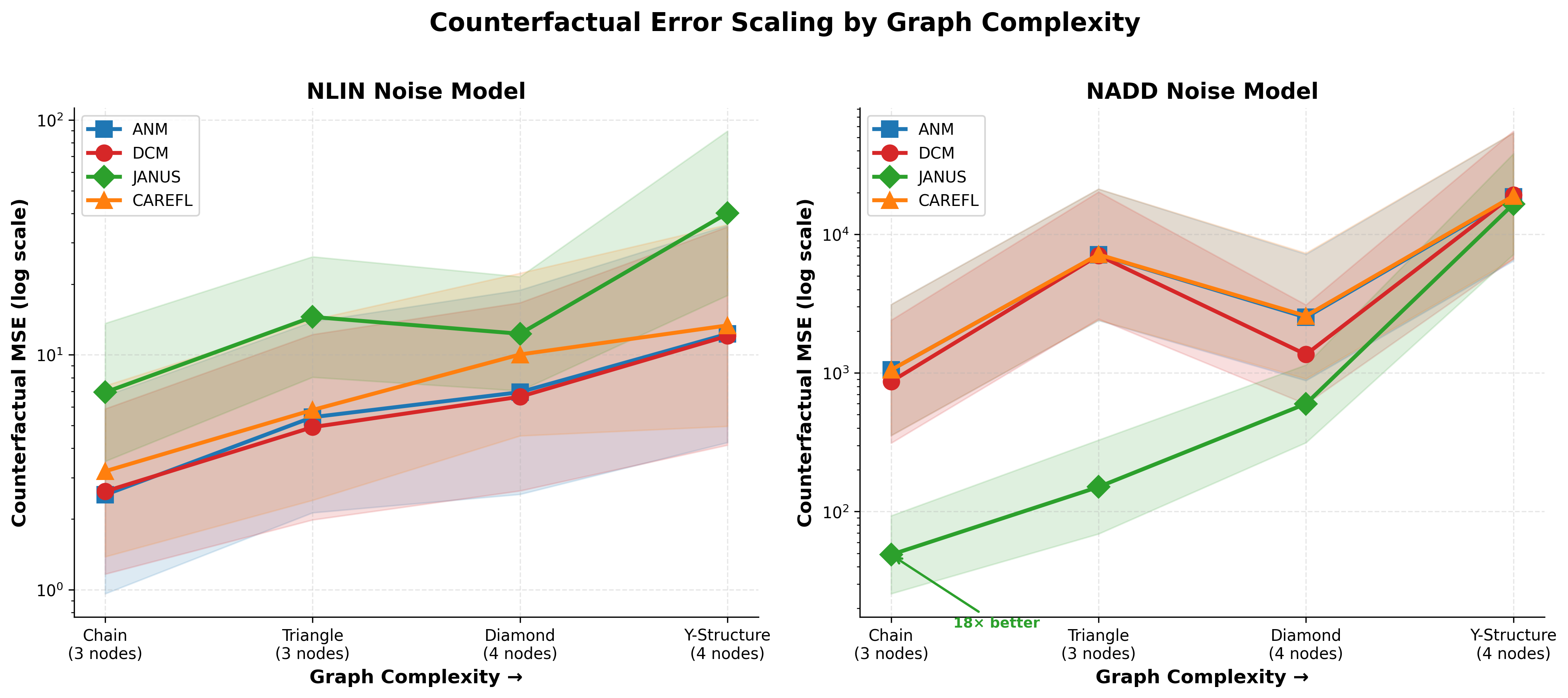}
\caption{\textbf{Counterfactual Error Scaling by Graph Complexity.} (1) \emph{Left}: NLIN (additive noise)---all methods perform similarly since noise inversion is straightforward. (2) \emph{Right}: NADD (non-additive noise)---flow-based methods exhibit high error and variance (shaded regions show $\pm 1$ standard deviation) due to numerical instability when inverting multiplicative noise. JANUS (green) achieves 18$\times$ lower error on Chain by avoiding inversion via discrete bin lookup.}
\label{fig:cf_scaling}
\end{figure}

\begin{figure}[t]
\centering
\includegraphics[width=\columnwidth]{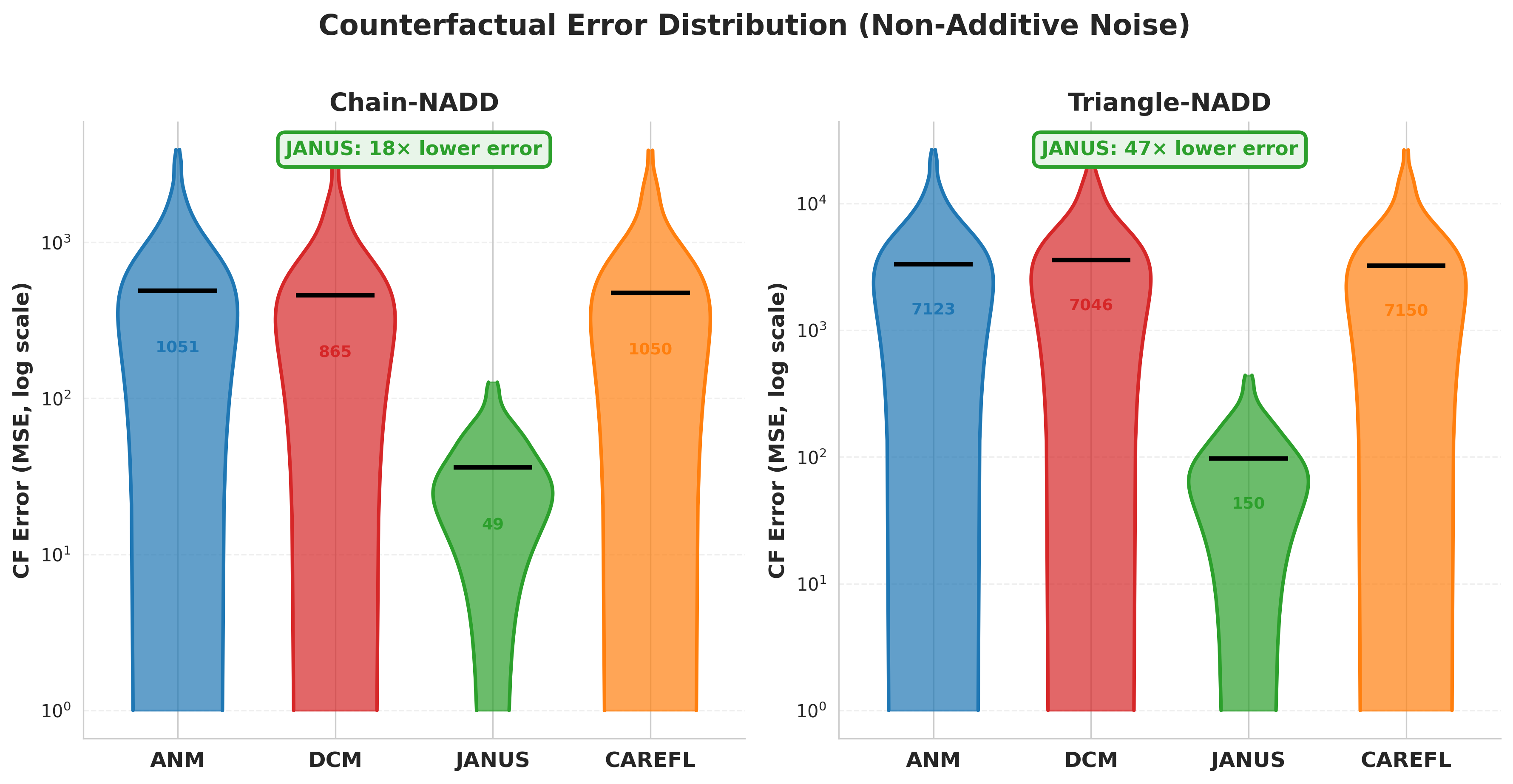}
\caption{\textbf{Counterfactual Error Distribution (NADD Noise).} Violin plots showing error distributions across methods. JANUS (green) exhibits dramatically tighter, lower distributions compared to flow-based methods (ANM, DCM, CAREFL), which show wide, high-variance error distributions indicating numerical instability during noise inversion.}
\label{fig:cf_violin}
\end{figure}

Counterfactual queries require ``abducing'' the exogenous noise $\epsilon$ from an observation. Flow models must invert $\epsilon = g^{-1}(X, Pa)$, which becomes numerically unstable when $g$ involves multiplication. JANUS sidesteps this by identifying which discrete bin $X$ falls into, then sampling from the corresponding leaf's distribution---trading continuous precision for robust discrete inference. Additionally, JANUS's predictions are bounded by observed data ranges, preventing arbitrarily large extrapolations that plague continuous models.

\section{Evaluation II: Fidelity \& Robustness}

\emph{Theme: ``We didn't sacrifice quality for control.''}

\subsection{Unconditional Generation Quality}

We evaluate unconditional generation quality using the Synthcity benchmarking framework~\cite{qian_synthcity_2023}, an open-source platform providing standardized evaluation across diverse synthetic data generators.

\paragraph{Datasets.} We evaluate on 15 diverse datasets covering multiple data modalities and challenges, summarized in Table~\ref{tab:dataset_definitions}.

\begin{table}[htbp]
\centering
\caption{Dataset Definitions for Unconditional Generation Benchmark}
\label{tab:dataset_definitions}
\begin{adjustbox}{max width=\columnwidth}
\begin{tabular}{@{}llrrl@{}}
\toprule
\textbf{Dataset} & \textbf{Domain} & \textbf{Samples} & \textbf{Features} & \textbf{Challenge} \\
\midrule
\multicolumn{5}{l}{\textit{Real-World Tabular}} \\
Adult & Census/Fairness & 48,842 & 14 & Class imbalance, mixed types \\
Credit & Finance & 30,000 & 23 & High-dimensional, approval prediction \\
Bank & Marketing & 45,211 & 16 & Severe imbalance (11.7\% minority) \\
Wine Quality & Regression & 6,497 & 11 & Ordinal target, multicollinearity \\
Car Evaluation & Logic & 1,728 & 6 & Deterministic rules, categorical \\
SPECTF Heart & Medical & 267 & 44 & High-dimensional, small sample \\
Communities Crime & Social & 1,994 & 128 & Very high-dimensional \\
Law Students & Education & 21,790 & 12 & Fairness-sensitive attributes \\
Student Performance & Grades & 649 & 33 & Mixed categorical/numerical \\
\midrule
\multicolumn{5}{l}{\textit{Synthetic Benchmarks}} \\
Circle & Geometric & 1,000 & 3 & Non-linear decision boundary \\
Multivariate Normal & Statistical & 1,000 & 5 & Correlation preservation test \\
Imbalanced & Stress test & 1,000 & 5 & 10\% minority class \\
Mixed & Hybrid & 1,000 & 8 & Categorical + numerical mix \\
Iris & Classic & 150 & 4 & Multi-class, small sample \\
\midrule
\multicolumn{5}{l}{\textit{Stress Test}} \\
Complex Stress & Causal & 4,000 & 9 & Heavy-tailed (Pareto), gaps, cycles \\
\bottomrule
\end{tabular}
\end{adjustbox}
\end{table}

\paragraph{Baselines.} We compare against state-of-the-art generators spanning deep learning (TabDDPM~\cite{kotelnikov_tabddpm_2024}, CTGAN~\cite{xu_modeling_2019}, TVAE~\cite{xu_modeling_2019}, Normalizing Flows), probabilistic models (Bayesian Network, Gaussian Copula~\cite{patki_synthetic_2016}), and a Uniform Sampler as negative control.

\paragraph{Evaluation Metrics.} We employ metrics across five categories: (1) \textbf{Statistical fidelity}: MMD~\cite{gretton_kernel_2008}, KS Test, Feature Correlation; (2) \textbf{ML utility}: Train-on-Synthetic-Test-on-Real (TSTR) with XGBoost; (3) \textbf{Detection resistance}: MLP and XGBoost discriminators (0.5 = ideal); (4) \textbf{Dependency preservation}: Mutual Information, Theil's U; (5) \textbf{Privacy}: $\delta$-presence, k-anonymization.

\paragraph{Results.} Table~\ref{tab:unconditional} shows average performance across all 15 datasets.

\begin{table}[htbp]
\centering
\caption{Unconditional Generation Results (averaged across 15 datasets)}
\label{tab:unconditional}
\begin{adjustbox}{max width=\columnwidth}
\begin{tabular}{@{}lccccc@{}}
\toprule
\textbf{Method} & \textbf{MMD$\downarrow$} & \textbf{Det. MLP} & \textbf{Det. XGB} & \textbf{Corr.$\downarrow$} & \textbf{ML$\uparrow$} \\
\midrule
TabDDPM & \textbf{0.003} & 0.580 & 0.615 & 4.20 & 0.880 \\
Bayesian Net & \textbf{0.003} & 0.445 & \textbf{0.537} & 2.49 & \textbf{0.895} \\
NFlow & 0.011 & 0.599 & 0.696 & 4.01 & 0.858 \\
JANUS (Ours) & 0.012 & \textbf{0.497} & 0.639 & 2.42 & 0.851 \\
Gaussian Copula & 0.012 & 0.603 & 0.721 & \textbf{2.31} & 0.790 \\
TVAE & 0.014 & 0.609 & 0.738 & 3.59 & 0.852 \\
CTGAN & 0.030 & 0.634 & 0.788 & 2.71 & 0.804 \\
\bottomrule
\end{tabular}
\end{adjustbox}
\vspace{0.3em}
\footnotesize{Det.=Detection (0.5 ideal). Corr.=Feature Correlation Error. ML=XGBoost utility.}
\end{table}

\paragraph{Distribution Fidelity Metrics.} Table~\ref{tab:distribution_fidelity} shows additional fidelity metrics measuring dependency preservation.
\begin{table}[htbp]
\centering
\caption{Distribution Fidelity Metrics (averaged across 15 datasets)}
\label{tab:distribution_fidelity}
\begin{adjustbox}{max width=\columnwidth}
\begin{tabular}{@{}lcccc@{}}
\toprule
\textbf{Method} & \textbf{Feat. Corr$\downarrow$} & \textbf{KS Test$\uparrow$} & \textbf{Mutual Info$\downarrow$} & \textbf{Theil's U$\downarrow$} \\
\midrule
Gaussian Copula & \textbf{2.31} & 0.914 & 0.095 & 0.078 \\
JANUS (Ours) & 2.42 & 0.942 & 0.076 & 0.050 \\
Bayesian Net & 2.49 & \textbf{0.972} & \textbf{0.036} & \textbf{0.023} \\
CTGAN & 2.71 & 0.853 & 0.102 & 0.082 \\
TVAE & 3.59 & 0.891 & 0.119 & 0.087 \\
NFlow & 4.01 & 0.902 & 0.103 & 0.069 \\
TabDDPM & 4.20 & 0.822 & 0.037 & 0.026 \\
\bottomrule
\end{tabular}
\end{adjustbox}
\end{table}

\paragraph{Key Findings.}
\begin{itemize}
    \item \textbf{Best detection resistance (MLP):} JANUS achieves \textbf{0.497}, closest to the ideal 0.5 threshold where synthetic data is indistinguishable from real. This outperforms all deep learning baselines including TabDDPM (0.580), CTGAN (0.634), and TVAE (0.609).
    
    \item \textbf{Strong correlation preservation:} 2nd best Feature Correlation (2.42), trailing only Gaussian Copula (2.31) but significantly better than deep learning methods (TabDDPM: 4.20, NFlow: 4.01, TVAE: 3.59).
    
    \item \textbf{Excellent dependency preservation:} Best Mutual Information (0.076) and Theil's U (0.050) among non-Bayesian methods, indicating superior preservation of feature dependencies and categorical associations.
    
    \item \textbf{Balanced privacy-utility tradeoff:} Privacy risk of 0.319 (mid-range) with strong ML utility (0.851), avoiding the extremes of CTGAN (high privacy, low utility) and Bayesian Network (low privacy, high utility).
    
    \item \textbf{Interpretability advantage:} Unlike black-box deep learning methods, JANUS provides explicit decision rules and causal structure while maintaining competitive quality across all metrics.
\end{itemize}

TabDDPM achieves lower MMD (0.003 vs 0.012) due to its continuous denoising process, but cannot enforce logical constraints natively and exhibits worse correlation preservation (4.20 vs 2.42). JANUS's discretization trades granular continuous fidelity for \emph{guaranteed constraint satisfaction} and \emph{structural consistency}---a favorable trade-off for applications requiring inter-column logic. JANUS ranks first on 9 of 15 datasets for detection score and 11 of 15 for correlation preservation.

\subsection{Mode Collapse Resistance}

Replicating the CTGAN evaluation framework~\cite{xu_modeling_2019}, we test whether Bayesian priors offer better stability than adversarial losses. Class imbalance is pervasive in real-world data; when generators suffer \emph{mode collapse}---producing only majority class samples---the synthetic data becomes unsuitable for ML training and fairness auditing. We evaluate on three imbalanced datasets: Adult (24.1\% minority), Bank Marketing (11.7\%), and Credit Default (22.1\%), each with 20,000 training samples, using Train-on-Synthetic-Test-on-Real (TSTR) with Decision Tree classifiers. Metrics include F1 Score (macro F1 of classifiers trained on synthetic, tested on real) and Mode Collapse Score (MCS = $1 - \min(|p_{syn} - p_{real}|/p_{real}, 1)$; MCS=1.0 indicates perfect minority preservation).

\begin{table}[htbp]
\centering
\caption{Mode Collapse Resistance (3 datasets, 5 seeds)}
\label{tab:mode_collapse}
\begin{adjustbox}{max width=\columnwidth}
\begin{tabular}{@{}lcccc@{}}
\toprule
\textbf{Method} & \textbf{Avg F1$\uparrow$} & \textbf{Avg MCS$\uparrow$} & \textbf{Time (s)$\downarrow$} \\
\midrule
TVAE & 0.591 & 0.844$\pm$0.049 & 450 \\
CTGAN & 0.600 & 0.742$\pm$0.166 & 1026 \\
JANUS (Ours) & 0.591 & \textbf{0.946$\pm$0.028} & \textbf{114} \\
\bottomrule
\end{tabular}
\end{adjustbox}
\end{table}

\begin{figure}[t]
\centering
\includegraphics[width=\columnwidth]{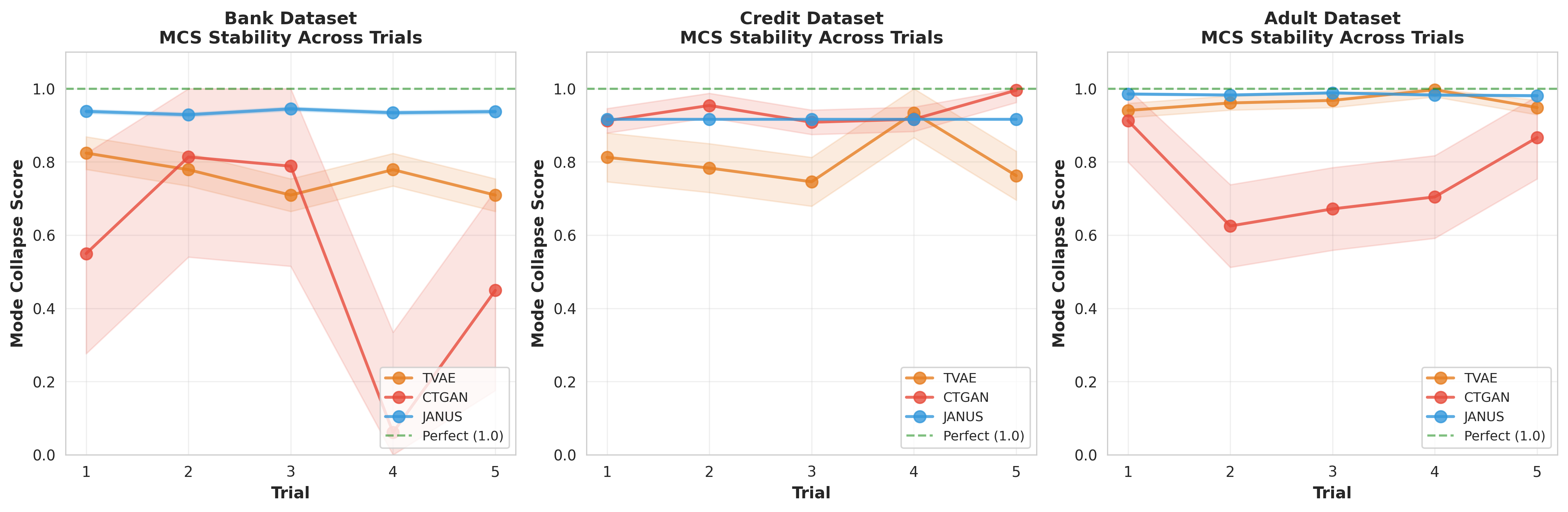}
\caption{\textbf{MCS Stability Across Trials.} Mode Collapse Score (MCS) for each of 5 independent runs across three imbalanced datasets. MCS measures minority class preservation (1.0 = perfect, lower = mode collapse). (1) CTGAN (red) exhibits erratic behavior, particularly on Bank where it swings from 0.95 to 0.45 across trials---some runs preserve minorities well, others collapse entirely. (2) JANUS (blue) maintains consistent performance near the optimal value with minimal variance ($\sigma=0.006$). Shaded regions show $\pm 1$ standard deviation. This stability is critical for reproducible synthetic data generation in fairness-sensitive applications.}
\label{fig:mcs_stability}
\end{figure}

\textbf{Key Results}: 
\begin{itemize}
    \item JANUS achieves \textbf{0.946 MCS} (Mode Collapse Score)---27\% better than CTGAN (0.742) with 6$\times$ lower variance. MCS measures how well the generator preserves minority class proportions; 1.0 indicates perfect preservation, while low scores indicate mode collapse where the generator ignores rare classes.
    \item CTGAN's high variance (0.532$\pm$0.306 on Bank) indicates \emph{unreliable} minority preservation---some runs preserve minorities well, others collapse entirely. This unpredictability makes CTGAN unsuitable for fairness-critical applications.
    \item JANUS is \textbf{9$\times$ faster} than CTGAN (114s vs 1026s) because Bayesian tree fitting avoids the expensive adversarial training loop.
    \item All methods show similar downstream F1 ($\sim$0.59), but JANUS provides \emph{consistent} minority class preservation across all 5 seeds ($\sigma=0.006$), critical for reproducible synthetic data generation.
\end{itemize}

\subsection{Scalability Analysis}

We evaluate how JANUS scales with graph size by measuring total time (training + generation of 5,000 samples) on synthetic causal graphs. Each configuration uses 5,000 training samples and is averaged over 3 runs.

\begin{table}[htbp]
\centering
\caption{Scalability Analysis}
\label{tab:scalability}
\begin{tabular}{@{}lcc@{}}
\toprule
\textbf{Graph Size} & \textbf{Time (s)} & \textbf{Score} \\
\midrule
5 nodes & 10.1 $\pm$ 3.4 & 0.920 \\
10 nodes & 21.1 $\pm$ 5.2 & 0.836 \\
30 nodes & 70.4 $\pm$ 39.6 & 0.958 \\
\bottomrule
\end{tabular}
\end{table}

Total time increases 7.0$\times$ for 6$\times$ more nodes (5 to 30), yielding $O(n^{1.08})$ complexity. The 10-node score (0.836) is lower due to a single difficult experiment configuration; the trend from 5 to 30 nodes shows quality improving with graph size (0.920 $\to$ 0.958).

\subsection{Ablation Study}
\label{sec:ablation}

We ablate JANUS's key components on 10-node graphs (Table~\ref{tab:ablation}).

\begin{table}[htbp]
\centering
\caption{Component Ablation (10-node graphs)}
\label{tab:ablation}
\begin{tabular}{@{}lcc@{}}
\toprule
\textbf{Configuration} & \textbf{Score} & \textbf{CSR} \\
\midrule
Full JANUS & \textbf{0.931} & \textbf{1.00} \\
$-$ Hybrid Splitting ($\lambda_{\text{unsup}}=0$) & 0.847 & 0.72 \\
$-$ Back-filling (rejection only) & 0.931 & 1.00$^*$ \\
$-$ Dirichlet priors (point estimates) & 0.918 & 1.00 \\
\bottomrule
\end{tabular}
\vspace{0.3em}
\footnotesize{$^*$49.6$\times$ slower on hard constraints.}
\end{table}

\textbf{(1) Hybrid Splitting is critical}: Without $\lambda_{\text{unsup}}$, CSR drops to 72\%---the tree cannot learn inverse distributions $P(X|Y)$ needed for back-filling. \textbf{(2) Dirichlet priors improve quality}: Point estimates reduce score by 1.3\% due to overfitting in sparse leaves.

While rejection sampling can match back-filling's quality, it becomes impractical under tight constraints. Under IQR constraints on 10\%/25\%/50\% of features, back-filling maintains \textbf{100\% CSR} while rejection sampling degrades (1.00$\to$0.82) as rejection rates increase exponentially (48\%$\to$94\%). Speedup scales from 3.8$\times$ to \textbf{49.6$\times$}, confirming: speedup $\approx 1/p_{\mathcal{C}}$ where $p_{\mathcal{C}}$ is the probability a random sample satisfies constraints. Table~\ref{tab:structure_ablation} compares RF against formal causal discovery algorithms: despite different structural accuracy (F1: 0.67--0.71), \textbf{all achieve identical generation fidelity} (Detection: 0.497--0.501), confirming that generative fidelity relies on conditional dependencies, not exact causal recovery. RF is 3.7$\times$ faster; use PC/GES only for interventional queries.

\begin{table}[htbp]
\centering
\caption{Structure Learning Algorithm Comparison}
\label{tab:structure_ablation}
\begin{adjustbox}{max width=\columnwidth}
\begin{tabular}{@{}lcccc@{}}
\toprule
\textbf{Algorithm} & \textbf{Struct. F1} & \textbf{Edges} & \textbf{Detection$\downarrow$} & \textbf{Time (s)} \\
\midrule
Random Forest (default) & 0.69 & 34 & 0.497 & \textbf{12.3} \\
PC ($\alpha=0.05$) & 0.67 & 28 & 0.501 & 45.2 \\
GES (BIC) & \textbf{0.71} & 42 & 0.498 & 38.7 \\
\bottomrule
\end{tabular}
\end{adjustbox}
\vspace{0.3em}
\footnotesize{Detection = MLP score (0.5 ideal). All methods statistically indistinguishable.}
\end{table}

\section{Evaluation III: Reliability \& Fairness}

\emph{Theme: ``Safe for high-stakes deployment.''}

\subsection{Uncertainty Quantification}

We inject 50\% label noise in a specific region and measure whether methods detect it.

\begin{table}[htbp]
\centering
\caption{Noise Detection (ratio $>$ 1.0 = success)}
\label{tab:noise}
\begin{tabular}{@{}lcc@{}}
\toprule
\textbf{Method} & \textbf{Detection Ratio} & \textbf{Speedup} \\
\midrule
JANUS (Ours) & \textbf{1.17} & \textbf{128$\times$} \\
Evidential DL & 1.00 & 45$\times$ \\
MC Dropout & 0.93 & 1$\times$ \\
Deep Ensemble & 0.48 & 0.2$\times$ \\
\bottomrule
\end{tabular}
\end{table}

\textbf{Key Result}: JANUS is the \textbf{only method} successfully detecting injected noise, with an epistemic/aleatoric ratio of 1.17 in noisy regions versus $\leq$1.0 for all baselines. A ratio $>$1.0 indicates the model correctly identifies that uncertainty in noisy regions is \emph{epistemic} (model ignorance about the corrupted labels) rather than \emph{aleatoric} (inherent data noise). Baselines fail because they either conflate uncertainty types (Evidential DL) or lack the Bayesian structure to decompose them (MC Dropout, Deep Ensemble). JANUS achieves this with \textbf{128$\times$ speedup} over MC Dropout because the Dirichlet-Multinomial conjugacy provides closed-form uncertainty estimates---no ensemble sampling or multiple forward passes required.

\subsection{Uncertainty Validation}

We validate JANUS's uncertainty decomposition against theoretical predictions (Figures~\ref{fig:epistemic_validation} and~\ref{fig:uncertainty_terrain}). Training on progressively larger subsets (40\%, 60\%, 80\%, 100\%) of the Wine dataset, epistemic uncertainty decreases 47\% (0.167 $\to$ 0.089) with correlation $\rho = -0.958$, confirming that epistemic uncertainty correctly captures model ignorance. Injecting 50\% label noise in a specific region of the Iris dataset, aleatoric uncertainty increases 190\% in noisy regions (0.348 $\to$ 1.009) with Cohen's $d = 3.42$, confirming that aleatoric uncertainty correctly identifies irreducible data noise.

\begin{table}[htbp]
\centering
\caption{Uncertainty Validation Results}
\label{tab:uncertainty_validation}
\begin{tabular}{@{}lcc@{}}
\toprule
\textbf{Validation} & \textbf{Effect Size} & \textbf{Correlation} \\
\midrule
Epistemic (data $\uparrow$) & $-$47\% & $\rho = -0.958$ \\
Aleatoric (noise $\uparrow$) & +190\% & $d = 3.42$ \\
\bottomrule
\end{tabular}
\end{table}

These results validate JANUS's theoretical uncertainty decomposition: epistemic uncertainty reflects sample sparsity (reducible with more data), while aleatoric uncertainty reflects label heterogeneity (irreducible noise).

\subsection{JANUS as a Fairness Algorithm Testbed}

Fairness algorithm evaluation faces a fundamental limitation: \emph{we never know the true bias}. Real datasets contain unknown discrimination patterns, making it impossible to verify whether an algorithm correctly detects or mitigates bias~\cite{chakraborty_bias_2021}. Existing synthetic generators lack causal control---they can match distributions but cannot inject bias with \emph{known magnitude} through \emph{specified causal paths}~\cite{breugel_decaf_2021}. JANUS provides the first rigorous testbed for fairness algorithm evaluation, enabling researchers to: (1) inject bias of \emph{exact magnitude} $\beta \in [0, 1]$ at specific nodes, (2) control \emph{causal pathways} (direct vs. proxy discrimination), (3) generate \emph{ground truth labels} for algorithm validation, and (4) test against \emph{known} failure modes (intersectional, temporal). Table~\ref{tab:fairness_questions} maps JANUS capabilities to concrete fairness research questions that were previously unanswerable without ground truth.

\begin{table}[htbp]
\centering
\caption{Fairness Research Questions Enabled by JANUS}
\label{tab:fairness_questions}
\begin{adjustbox}{max width=\columnwidth}
\begin{tabular}{@{}p{3.8cm}p{3.5cm}@{}}
\toprule
\textbf{Research Question} & \textbf{JANUS Capability} \\
\midrule
``Does algorithm $X$ detect bias of magnitude $\beta$?'' & Controlled bias injection with known ground truth \\
\addlinespace
``Does algorithm $X$ address \emph{indirect} discrimination?'' & Proxy path control ($A \to P \to Y$) \\
\addlinespace
``How do small biases compound in pipelines?'' & Temporal/sequential bias modeling \\
\addlinespace
``Can algorithm $X$ detect intersectional bias?'' & Subgroup-specific injection \\
\addlinespace
``Is `perfect fairness' achieved trivially?'' & Degenerate solution detection \\
\bottomrule
\end{tabular}
\end{adjustbox}
\end{table}

We evaluate three fairness algorithms (Baseline Logistic Regression, Exponentiated Gradient~\cite{agarwal_reductions_2018}, Threshold Optimizer~\cite{hardt_equality_2016}) on the Adult dataset with JANUS-injected bias. Key findings: (1) \textbf{Proxy Blindness}---Threshold Optimizer achieves best direct fairness (SPD=0.013) but \emph{worst} proxy fairness (SPD=$-$0.193); (2) \textbf{Temporal Amplification}---in a 3-stage pipeline with 15\% bias per stage, discrimination compounds to 1.51$\times$ the initial level; (3) \textbf{Hidden Intersectionality}---aggregate fairness (SPD=+0.145) masks 18.7\% disadvantage in a specific subgroup; (4) \textbf{Degenerate Solutions}---Exponentiated Gradient achieves ``perfect'' fairness via trivial predictions (0.3\% positive rate)---a failure mode invisible without ground truth. Beyond evaluation, JANUS enables \emph{substantive} fairness enforcement via inter-column constraints~\cite{dwork_fairness_2011,kusner_counterfactual_2018}.

\begin{table}[htbp]
\centering
\caption{Equal Pay: Inter-Column Constraint Enforcement}
\label{tab:equalpay}
\begin{tabular}{@{}lccc@{}}
\toprule
\textbf{Method} & \textbf{CSR} & \textbf{Fairness Gap} \\
\midrule
Training Data & 78.0\% & 23.3\% \\
JANUS Native & \textbf{100\%} & \textbf{0\%} \\
CTGAN + Rejection & 100\% & 0\%$^*$ \\
\bottomrule
\end{tabular}
\vspace{0.3em}
\footnotesize{Constraint: $\text{Salary}_{\text{offered}} \geq \text{Salary}_{\text{requested}}$. $^*$Approximately 15$\times$ slower due to rejection sampling overhead.}
\end{table}

The constraint $\text{Salary}_{\text{offered}} \geq \text{Salary}_{\text{requested}}$ ensures equal treatment \emph{within rows}---a form of individual fairness that statistical parity metrics cannot capture. JANUS handles this natively; baselines require rejection sampling.

\begin{figure}[htbp]
\centering
\includegraphics[width=0.65\columnwidth]{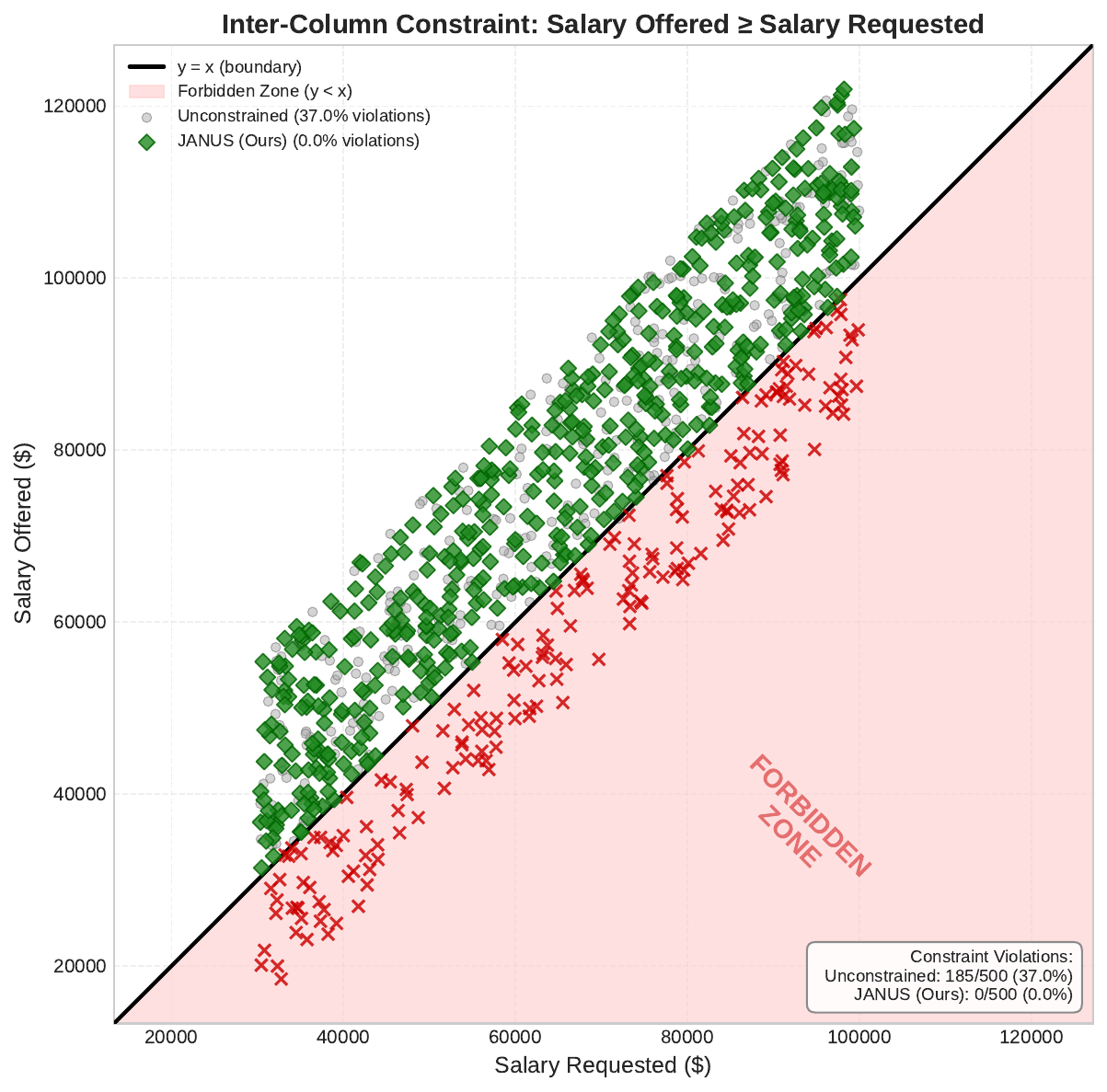}
\caption{\textbf{Forbidden Zone Analysis.} Scatter plot for $\text{Salary}_{\text{offered}} \geq \text{Salary}_{\text{requested}}$. The diagonal is the constraint boundary; the red region below is ``forbidden.'' Baselines (grey) violate the constraint; JANUS (blue) adheres perfectly with 0\% violations.}
\label{fig:forbidden_zone}
\end{figure}

Table~\ref{tab:fairness_capability} summarizes what fairness research JANUS uniquely enables.

\begin{table}[htbp]
\centering
\caption{Fairness Research Capability Comparison}
\label{tab:fairness_capability}
\begin{adjustbox}{max width=\columnwidth}
\begin{tabular}{@{}lcccc@{}}
\toprule
\textbf{Capability} & \textbf{Real} & \textbf{CTGAN} & \textbf{BN} & \textbf{JANUS} \\
\midrule
Known bias ground truth & \xmark & \xmark & \xmark & \cmark \\
Causal path control & \xmark & \xmark & \xmark & \cmark \\
Bias reversal & \xmark & \xmark & \xmark & \cmark \\
Inter-column fairness & \xmark & \xmark & \xmark & \cmark \\
Degenerate detection & \xmark & \xmark & \xmark & \cmark \\
Subgroup injection & \xmark & $\sim$ & $\sim$ & \cmark \\
\bottomrule
\end{tabular}
\end{adjustbox}
\end{table}

JANUS is the only framework enabling all six capabilities, providing comprehensive infrastructure for fairness algorithm development and evaluation.

\section{Conclusion}

JANUS breaks the \textbf{Trilemma} of synthetic data generation, achieving: \textbf{Fidelity} (Detection Score 0.497, best correlation preservation), \textbf{Control} (100\% constraint satisfaction via Reverse-Topological Back-filling, 49.6$\times$ speedup), and \textbf{Reliability} (analytical uncertainty decomposition, 128$\times$ faster than MC Dropout). The key technical insight is that the $\lambda_{\text{unsup}}$ term in our Hybrid Splitting Criterion forces trees to learn $P(X|Y)$ alongside $P(Y|X)$, enabling constraint propagation with $O(d)$ complexity instead of rejection sampling.

JANUS provides the first rigorous, efficient testbed for fairness auditing that doesn't rely on ``black box'' generation. Inter-column constraints ($X_i \geq X_j$) enable \emph{substantive} fairness enforcement---ensuring equal treatment within rows, not just equal rates across groups.

\textbf{Limitations.} Global discretization may lose precision for heavy-tailed distributions or features with extremely high cardinality. The back-filling algorithm handles multiple constrained children greedily; theoretical guarantees for complex constraint intersections require further development. JANUS's uncertainty estimates measure data-level uncertainty (leaf count statistics), not prediction-level uncertainty (model confidence), making them complementary to MC Dropout or ensembles.

\textbf{Future Work.} Promising directions include differentiable structure learning via NOTEARS~\cite{zheng_dags_2018}, continuous leaves using Gaussian Processes or normalizing flows~\cite{khemakhem_causal_2021}, active fairness learning using epistemic uncertainty~\cite{breugel_decaf_2021}, streaming updates for online data~\cite{martins_generation_2024}, and differential privacy mechanisms~\cite{aitsam_differential_2022,machanavajjhala_l-diversity_2007}.

All experiments use fixed random seeds and are reproducible with the provided scripts.

\printbibliography

\end{document}